\definecolor{bgcolor}{rgb}{0.8,1,1}
\definecolor{bgcolor2}{rgb}{0.8,1,0.8}
\definecolor{niceblue}{rgb}{0.0,0.19,0.56}
\definecolor{PineGreen}{RGB}{0,110,51}
\definecolor{BrickRed}{RGB}{143,20,2}
\DeclareMathOperator*{\argmax}{arg\,max}
\DeclareMathOperator*{\argmin}{arg\,min}
\newcommand{\R}{\mathbb{R}}
\newcommand{\eqdef}{\stackrel{\text{def}}{=}}
\def\<#1,#2>{\left\langle #1,#2\right\rangle}
\newcolumntype{Y}{>{\centering\arraybackslash}X}
\newcommand{\algname}[1]{{\sf  #1}\xspace}
\newcommand{\qed}{\hfill\blacksquare}
\newcommand{\Tr}{\mathrm{Tr}}
\newcommand{\EE}{\mathbb{E}}
\newlength{\dhatheight}
\def\la{\langle}
\def\ra{\rangle}
\newtheorem{lemma}{Lemma}
\newtheorem{proposition}{Proposition}
\newtheorem{theorem}{Theorem}
\title{Optimal Flow Matching: \\Learning Straight Trajectories in Just One Step }
\author{Nikita Kornilov\\
   Skolkovo Institute of Science and Technology\\ 
   R.Center for AI, Innopolis University\\ Moscow Institute of Physics and Technology\\
  \texttt{kornilov.nm@phystech.edu} 
  \And
  Petr Mokrov\\
   Skolkovo Institute of Science and Technology\\
  \texttt{petr.mokrov@skoltech.ru}
  \And
  Alexander Gasnikov\\
  Innopolis University\\
  Moscow Institute of Physics and Technology\\
  Steklov Mathematical Institute of RAS\thanks{Russian Academy of Sciences}\\
  \texttt{gasnikov@yandex.ru}
  \And
  Alexander Korotin\\
  Skolkovo Institute of Science and Technology\\ Artificial Intelligence Research Institute\\
  \texttt{a.korotin@skoltech.ru}
}
\begin{document}

\maketitle

\begin{abstract}
Over the several recent years, there has been a boom in development of Flow Matching (FM) methods for generative modeling. One intriguing property pursued by the community is the ability to learn flows with straight trajectories which realize the Optimal Transport (OT) displacements. Straightness is crucial for the fast integration (inference) of the learned flow's paths. Unfortunately, most existing flow straightening methods are based on non-trivial iterative FM procedures which accumulate the error during training or exploit heuristics based on minibatch OT. To address these issues, we develop and theoretically justify the novel \textbf{Optimal Flow Matching} (OFM) approach which allows recovering the straight OT displacement for the quadratic transport in just one FM step. The main idea of our approach is the employment of vector field for FM which are parameterized by convex functions. The code of our OFM implementation and the conducted experiments is available at \url{https://github.com/Jhomanik/Optimal-Flow-Matching}.
\end{abstract}
\vspace{-2mm}

\section{Introduction}
\vspace{-2mm}
Recent success in generative modeling \cite{liu2024instaflow, esser2024scaling, cao2024survey} is mostly driven by Flow Matching (FM) \cite{lipman2023flow} models. These models move a known distribution to a target one via ordinary differential equations (ODE) describing the mass movement. However, such processes usually have curved trajectories, resulting in time-consuming ODE integration for sampling. To overcome this issue, researches developed several improvements of the FM \cite{liu2022rectified,liu2023flow, pooladian2023multisample}, which aim to recover more straight paths.

Rectified Flow (RF) method \cite{liu2022rectified,liu2023flow} iteratively solves FM and gradually rectifies trajectories. Unfortunately, in each FM iteration, it \textbf{accumulates the error}, see \cite[\S 2.2]{liu2023flow} and \cite[\S 6]{liu2022rectified}. This may spoil the performance of the method. The other popular branch of approaches to straighten trajectories is based on the connection between straight paths and Optimal Transport (OT) \cite{villani2021topics}. The main goal of OT is to find the way to move one probability distribution to another with the minimal effort. Such OT maps are usually described by ODEs with straight trajectories. In OT Conditional Flow Matching  (OT-CFM) \cite{pooladian2023multisample,tong2024improving}, the authors propose to apply FM on top of OT solution between batches from considered distributions. Unfortunately, such a heuristic does not guarantee straight paths because of  \textbf{minibatch OT biases}, see, e.g., \cite[Figure 1, right]{tong2024improving} for the practical illustration.

\textbf{Contributions.}
In this paper, we fix the above-mentioned problems of the straightening methods. 
    We propose a novel Optimal Flow Matching (OFM) approach ($\S$\ref{sec:OFM}) that after a \textbf{single} FM iteration obtains straight trajectories which can be simulated without ODE solving. It recovers OT flow for the quadratic transport cost function, i.e., it solves the Benamou–Brenier problem (Figure \ref{fig:OFM}).  
    We demonstrate the potential of OFM in the series of experiments and benchmarks ($\S$\ref{sec:main exp}).   
    \begin{wrapfigure}{r}{0.3\textwidth}
        \vspace{-3mm}
         \centering
         \includegraphics[width=0.3\textwidth]{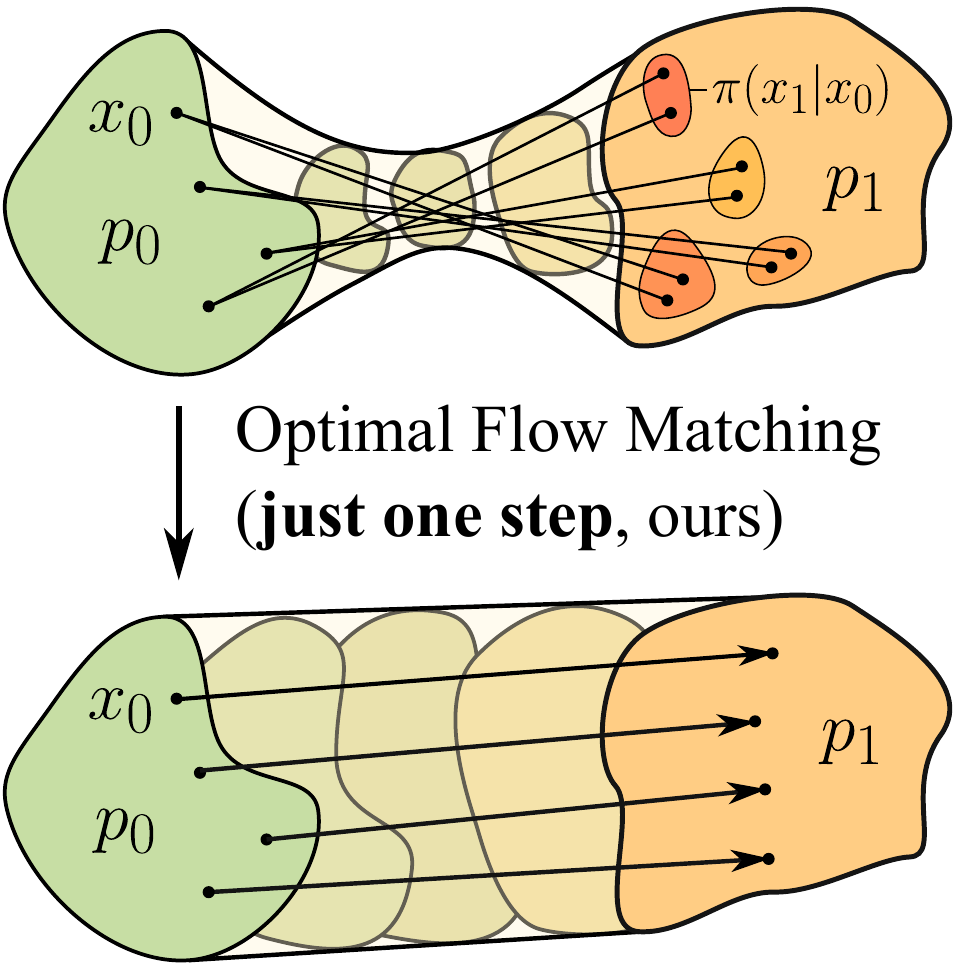}
         \caption{Our Optimal Flow Matching (OFM). For \textit{any} initial transport plan $\pi$ between $p_0$ and $p_1$, OFM obtains \textit{exactly straight} trajectories (in just a \textit{single} FM loss minimization) which carry out the OT displacement for the quadratic cost function.}
         \label{fig:OFM}
         \vspace{-12mm}
\end{wrapfigure} 
The main idea of our OFM is to consider during FM only specific vector fields which yield straight paths by design. These vector fields are the gradients of convex functions, which in practice are parametrized by Input Convex Neural Networks \cite{amos2017input}. In OFM, one can optionally use minibatch OT or any other transport plan as the input, and this is completely theoretically justified.

\section{Background and Related Works}
\vspace{-2mm}
In this section, we provide all necessary backgrounds for the theory. First, we recall static ($\S$\ref{sec:static OT}) and dynamic ($\S$\ref{sec:dynamic OT}) formulations of Optimal Transport and solvers ($\S$\ref{sec:solvers}) for them. Then, we recall Flow Matching ($\S$\ref{sec:FM}) and flow
 straightening approaches: OT-CFM ($\S$\ref{sec:MB}) and Rectified Flow ($\S$\ref{sec:RF}). 
 
\textbf{Notations.}
For vectors $x,y \in \R^D$, we denote the inner product by $\la x, y \ra$ and the corresponding $\ell_2$ norm by $\|x\| := \sqrt{\la x,x \ra}$. We use $\mathcal{P}_{2, ac}(\R^D)$ to refer to the set of absolute continuous probability distributions with the finite second moment. For vector $x \in \R^D$ and distribution $p \in \mathcal{P}_{2, ac}(\R^D)$, notation $x \sim p$ means that $x$ is sampled from $p$. For the push-forward operator, we use symbol $\#$.

 
\subsection{Static Optimal Transport}\label{sec:static OT}
\vspace{-2mm}
\textbf{Monge's and Kantorovich's formulations.} Consider two distributions $p_0, p_1 \in \mathcal{P}_{2, ac}(\R^D)$ and a cost function $c: \R^D \times \R^D \to \R$.  Monge's Optimal Transport formulation is given by
\begin{equation}\label{eq:monge formulation}
     \inf \limits_{T\#p_0 = p_1}\int_{\R^D} c(x_0, T(x_0)) p_0(x_0) dx_0,
\end{equation}
where the infimum is taken over measurable functions $T: \R^D \to \R^D$ which satisfy the mass-preserving constraint $T\#p_0 = p_1$. Such functions are called transport maps. If there exists a transport map $T^*$ that achieves the infimum, then it is called the optimal transport map.

Since the optimal transport map $T^*$ in Monge's formulation may not exist, there is Kantorovich's relaxation for problem \eqref{eq:monge formulation} which addresses this issue. Consider the set of transport plans $\Pi(p_0, p_1)$, i.e., the set of joint distributions on $\R^D \times \R^D$ which marginals are equal to $p_0$ and $p_1$, respectively. Kantorovich's Optimal Transport formulation is 
\begin{equation}\label{eq:kantorovich formulation}
    \inf \limits_{\pi\in \Pi(p_0, p_1) }\int_{\R^D \times \R^D} c(x_0, x_1) \pi(x_0, x_1) dx_0 dx_1.
\end{equation}
 With mild assumptions on $p_0, p_1$, the infimum is always achieved (possibly not uniquely). An optimal plan  $\pi^* \in \Pi(p_0, p_1) $ is called an optimal transport plan. If optimal $\pi^*$ has the form $[\text{id}, T^*]\# p_0$, then $T^*$ is the solution of Monge's formulation \eqref{eq:monge formulation}. 

 \textbf{Quadratic cost function.} In our paper, we mostly consider the quadratic cost function $c(x_0,x_1) = \frac{\| x_0- x_1\|^2}{2}$. In this case, infimums in both Monge's and Kantorovich's OT are always uniquely attained \citep[Brenier’s Theorem $2.12$]{villani2021topics}. They are related by $\pi^* = [\text{id}, T^*]\# p_0$. Moreover, the optimal values of  \eqref{eq:monge formulation} and \eqref{eq:kantorovich formulation} are equal to each other. 
 The square root of the optimal value is called the Wasserstein-2 distance $\mathbb{W}_2(p_0, p_1)$ between distributions $p_0$ and $p_1$, i.e.,  
 \begin{eqnarray}
     \mathbb{W}^2_2(p_0,p_1):= \min\limits_{\pi\in \Pi(p_0,p_1)}\!\!\!\!\int\limits_{\R^D \times \R^D}\!\!\!\!\frac{\|x_1-x_0\|^2}{2} \pi(x_0,x_1)dx_0dx_1\!\!
     \nonumber
     \\
     =\!\!\min \limits_{T\#p_0=p_1}\!\!\int_{\R^D} \!\!\frac{\|x_0-T(x_0)\|^2}{2} p_0(x_0)dx_0.
     \label{eq:W2 dist}
 \end{eqnarray}

\textbf{Dual formulation.}  For the quadratic cost, problem \eqref{eq:W2 dist} has the equivalent dual form \cite{villani2021topics}:
\begin{eqnarray}
 \mathbb{W}^2_2(p_0, p_1)\!&=&\!\textsc{Const}(p_0, p_1)\!-\!\min \limits_{\text{convex }\Psi } \underset{=: \mathcal{L}_{OT}(\Psi)}{\underbrace{ 
 \left[ \int_{\R^D} \Psi(x_0) p_0(x_0) dx_0 + \int_{\R^D} \overline{\Psi}(x_1) p_1(x_1) dx_1 \right]} }\label{eq:dual formulation},
\end{eqnarray}
where the minimum is taken over convex functions $\Psi(x): \R^D \to \R$. Here $\overline{\Psi}(x_1) := \sup_{x_0 \in \R^D} \left[ \la x_0, x_1 \ra - \Psi(x_0) \right]$ is the convex (Fenchel) conjugate function of $\Psi$. It is also convex.

The term $\textsc{Const}(p_0, p_1)$ does not depend on $\Psi$. Therefore, the minimization \eqref{eq:W2 dist} over transport plans $\pi$ is equivalent to the minimization of $\mathcal{L}_{OT}(\Psi)$ from \eqref{eq:dual formulation} over convex functions $\Psi$. Moreover, the optimal transport map $T^*$ can be expressed via an optimal $\Psi^*$ (the \textit{Brenier potential} \cite{villani2021topics}), namely, 
\begin{equation}\label{eq:opt map via grad}
    T^* = \nabla \Psi^*.
\end{equation}
\subsection{Dynamic Optimal Transport}\label{sec:dynamic OT}
\vspace{-2mm}
In \cite{benamou2000computational}, the authors show that the calculation of Optimal Transport map in \eqref{eq:W2 dist} for the quadratic cost can be equivalently reformulated in a dynamic form. This form operates with a vector fields defining time-dependent mass transport instead of just static transport maps. 

\textbf{Preliminaries.} We consider the fixed time interval $[0,1]$. Let $u(t, \cdot) \equiv u_t(\cdot): [0,1] \times \R^D \to \R^D$ be a vector field and  $\{\{z_t\}_{t \in [0,1]}\}$ be the set of random trajectories such that for each trajectory $\{z_t\}_{t \in [0,1]}$ the starting point $z_0$ is sampled from $p_0$ and $z_t$ satisfies the differential equation:
\begin{eqnarray}
    dz_t &=& u_t(z_t)dt, \quad z_0 \sim p_0. \label{eq:flow mathing ode} 
\end{eqnarray}
In other words, the trajectory $\{z_t\}_{t \in [0,1]}$ is defined by its initial point $z_0 \sim p_0$ and goes along the speed vector $u_t(z_t)$. Under mild assumptions on $u$, for each initial $z_0$, the trajectory is unique. 

Let  $\phi^u(t, \cdot) \equiv \phi^u_t(\cdot): [0,1] \times \R^D \to \R^D$ denote the flow map, i.e., it is the function that maps the initial $z_0$ to its position at moment of time $t$ according to the ODE \eqref{eq:flow mathing ode}, i.e.,
\begin{eqnarray}
    d \phi^u_t(z_0) &=& u_t(\phi^u_t(z_0)), \quad \phi^u_0(z_0) = z_0. \label{eq:push-forward operator}
\end{eqnarray}

If initial points $z_0$ of trajectories are distributed according to $p_0$, then \eqref{eq:flow mathing ode} defines a distribution $p_t$ of $z_t$ at time $t$, which can be expressed via with the push-forward operator, i.e.,  $p^u_t := \phi^u_t\#p_0$.

\textbf{Benamou–Brenier problem.}
Dynamic OT is the following minimization problem:
\begin{eqnarray}
    \mathbb{W}^2_2(p_0, p_1) = \inf_{u} && \int_{0}^1 \int_{\R^D} \frac{ \|u_t(x_t)\|_2^2}{2} \underset{:= p^u_t(x_t)}{\underbrace{\phi^u_t\#p_0(x_t)}} dx_tdt, \label{eq:dynamic ot}\\
    s.t.&& \phi^u_1\# p_0 = p_1.\notag
\end{eqnarray}
In \eqref{eq:dynamic ot}, we look for the vector fields $u$ that define the flows which start at $p_0$ and end at $p_1$. Among such flows, we seek for the one which has the minimal kinetic energy over the entire time interval. 

There is a connection between the static OT map $T^* = \nabla \Psi^*$ and the dynamic OT solution $u^*$. Namely, for every initial  point $z_0$, the vector field $u^*$ defines a linear trajectory $\{z_t\}_{t \in [0,1]}$:
\begin{equation}\label{eq:dynamic ot linear traj}
    z_t = t \nabla \Psi^*(z_0) + (1-t)z_0, \quad \forall t \in [0,1].
\end{equation}

\subsection{Continuous Optimal Transport Solvers}\label{sec:solvers}
\vspace{-2mm}

There exist a variety of continuous OT solvers \cite{genevay2016stochastic, seguy2018large, taghvaei20192, makkuva2020optimal, pmlr-v139-fan21d, daniels2021score, vargas2021solving, de2021diffusion, korotin2021continuous, rout2022generative, liu2023flow, korotin2023neural, korotin2023kernel, choi2023generative, fan2023neural, uscidda2023monge, amos2023on, tong2024improving, gushchin2024entropic, mokrov2024energyguided, asadulaev2024neural,gazdieva2023extremal}. For a survey of solvers designed for OT with quadratic cost, see \cite{korotin2021neural}. In this paper, we focus only on the most relevant ones, called the ICNN-based solvers \cite{taghvaei20192,  korotin2021neural, makkuva2020optimal, amos2023on}. These solvers directly minimize objective $\mathcal{L}_{OT}$ from \eqref{eq:dual formulation}  parametrizing a class of convex functions with convex in input neural networks called ICNNs \cite{amos2017input}  (for more details, see  “Parametrization of $\Psi$" in $\S$\ref{sec:practice}). Solvers details may differ, but the main idea remains the same. To calculate the conjugate function $\overline{\Psi}(x_1)$ at the point $x_1$, they solve the convex optimization problem from conjugate definition. Envelope Theorem \cite{afriat1971theory} allows obtaining closed-form formula for the gradient of the loss.

\vspace{-2mm}
\subsection{Flow Matching Framework}\label{sec:FM main sec}
\vspace{-2mm}
In this section, we recall popular approaches \cite{liu2023flow, liu2022rectified, pooladian2023multisample} to find fields $u$ which transport a given probability distribution $p_0$ to a target $p_1$ and their relation to OT.

\vspace{-2mm}
\subsubsection{Flow Matching (FM)} \label{sec:FM}
\vspace{-2mm}

To find such a field, one samples  points $x_0, x_1$ from a transport plan $\pi \in \Pi(p_0, p_1)$, e.g., the independent plan $p_0 \times p_1$. 
The vector field $u$ is encouraged to follow the direction $x_1 - x_0$ of the linear interpolation $x_t = (1-t) x_0 + t x_1$ at any moment $t \in [0, 1]$, i.e., one solves:
\begin{equation}\label{eq:rectflow min}
     \min\limits_{u} \mathcal{L}^\pi_{FM}(u) \!\!:=   \!\!\int_{0}^1 \left\{ \int\limits_{\R^D \times \R^D} \|  u_t(x_t) - (x_1 - x_0) \|^2 \pi(x_0, x_1) dx_0 dx_1 \right\}\! \!dt, x_t = (1-t) x_0 + t x_1.
\end{equation}
\begin{wrapfigure}{r}{0.3\textwidth}
    \includegraphics[width=0.3\textwidth]{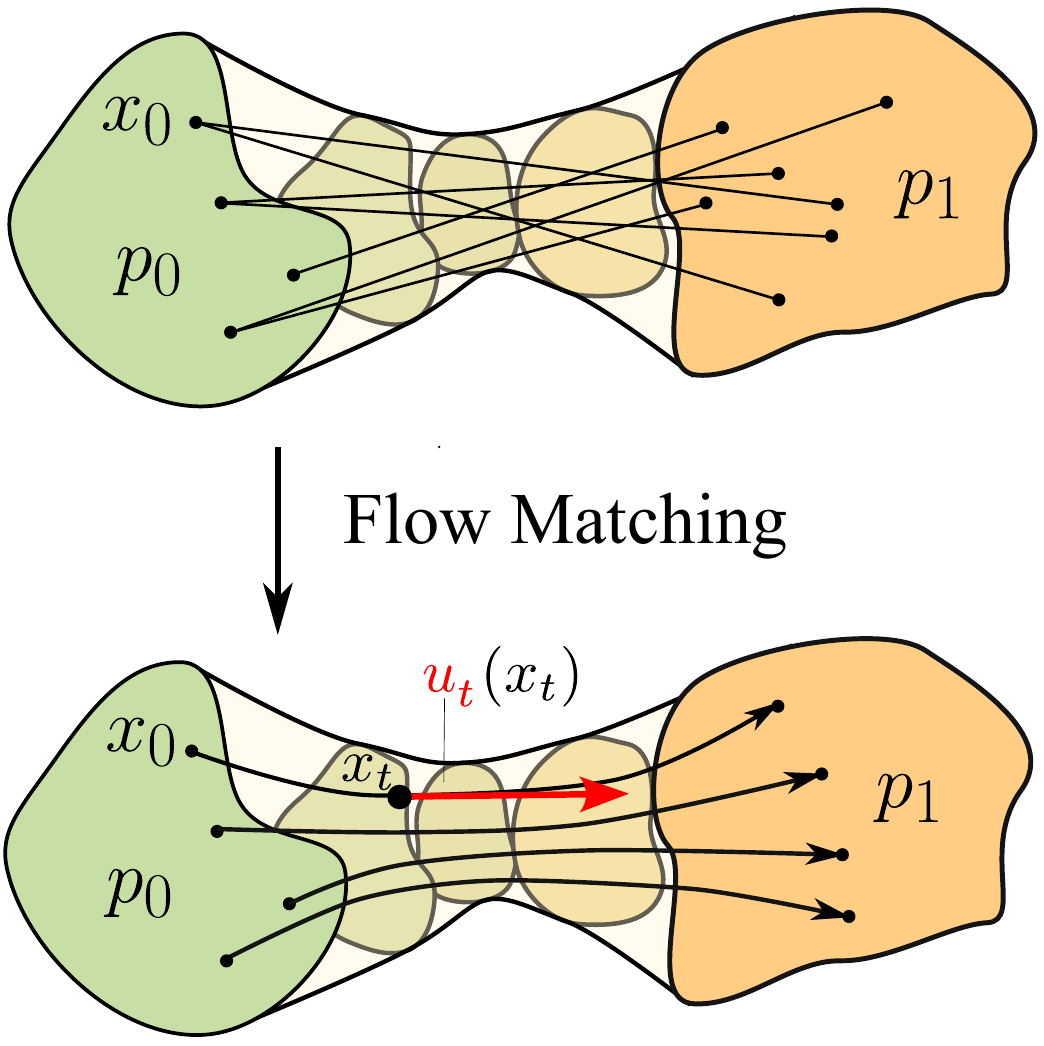}
    \caption{Flow Matching (FM) obtains a vector field $u$ moving $p_0$ to $p_1$. FM typically operates with the independent transport plan $\pi=p_0\times p_1$.}
    \label{fig:FM}
    \vspace{-5mm}
\end{wrapfigure}
We denote the solution of \eqref{eq:rectflow min} and the flow map \eqref{eq:push-forward operator} by  $u^{\pi}$ and $\phi^{\pi}$, respectively. The concept of FM is depicted in Figure \ref{fig:FM}.

The intuition of this procedure is as follows: linear interpolation $x_t = (1 - t) x_0 + t x_1$ is an intuitive way to move $p_0$ to $p_1$, but it requires knowing  $x_1$. By fitting $u$ with the direction $x_1 - x_0$, one yields the vector field that can construct this interpolation without any information about $x_1$.

The set of trajectories $\{\{z_t\}_{t \in [0,1]}\}$ generated by $u_t^{\pi}$ (with $z_0 \sim p_0$) has a useful property: the flow map $\phi_1^{\pi}$ transforms distribution $p_0$ to distribution $p_1$ for any initial transport plan $\pi$. Moreover, marginal distribution $p_t = \phi_t^{\pi}\#p_0$ is equal to the distribution of linear interpolation $x_t = (1-t) x_0 + t x_1$ for any $t$ and $x_0, x_1 \sim \pi$. This feature is called the marginal preserving property. 

To push point $x_0$ according to learned $u$, one needs to integrate ODE \eqref{eq:flow mathing ode} via numerical solvers. The vector fields with straight (or nearly straight) paths incur much smaller time-discretization error and increase effectiveness of computations, which is in high demand for applications. 

Researchers noticed that some initial plans $\pi$ can result in more straight paths after FM rather than the standard independent plan $p_0 \times p_1$. The two most popular approaches to choose better plans are Optimal Transport  Conditional Flow Matching 
 \cite{pooladian2023multisample, tong2024improving} and Rectified Flow \cite{liu2023flow}.

\vspace{-2mm}
\subsubsection{ Optimal Transport Conditional Flow Matching (OT-CFM)}\label{sec:MB}
\vspace{-2mm}

If one uses the OT plan $\pi^*$ as the initial plan for FM, then it returns the Brenier's vector field $u^*$, which generates exactly straight trajectories \eqref{eq:dynamic ot linear traj}. However, typically, the true OT plan $\pi^*$ is not available. In such a case, in order to achieve some level of straightness in the learned trajectories, a natural idea is to take the initial plan $\pi$ to be close to the optimal  $\pi^*$. Inspired by this, the authors of OT-CFM \cite{pooladian2023multisample, tong2024improving} take the advantage of minibatch OT plan approximation. Firstly, they independently sample batches of points from $p_0$ and $p_1$. Secondly, they join the batches together according to the discrete OT plan between them. The resulting joined batch is then used in FM. The concept of OT-CFM is depicted in Figure \ref{fig:OT-CFM}.
\begin{wrapfigure}{r}{0.3\textwidth}
    \includegraphics[width=0.3\textwidth]{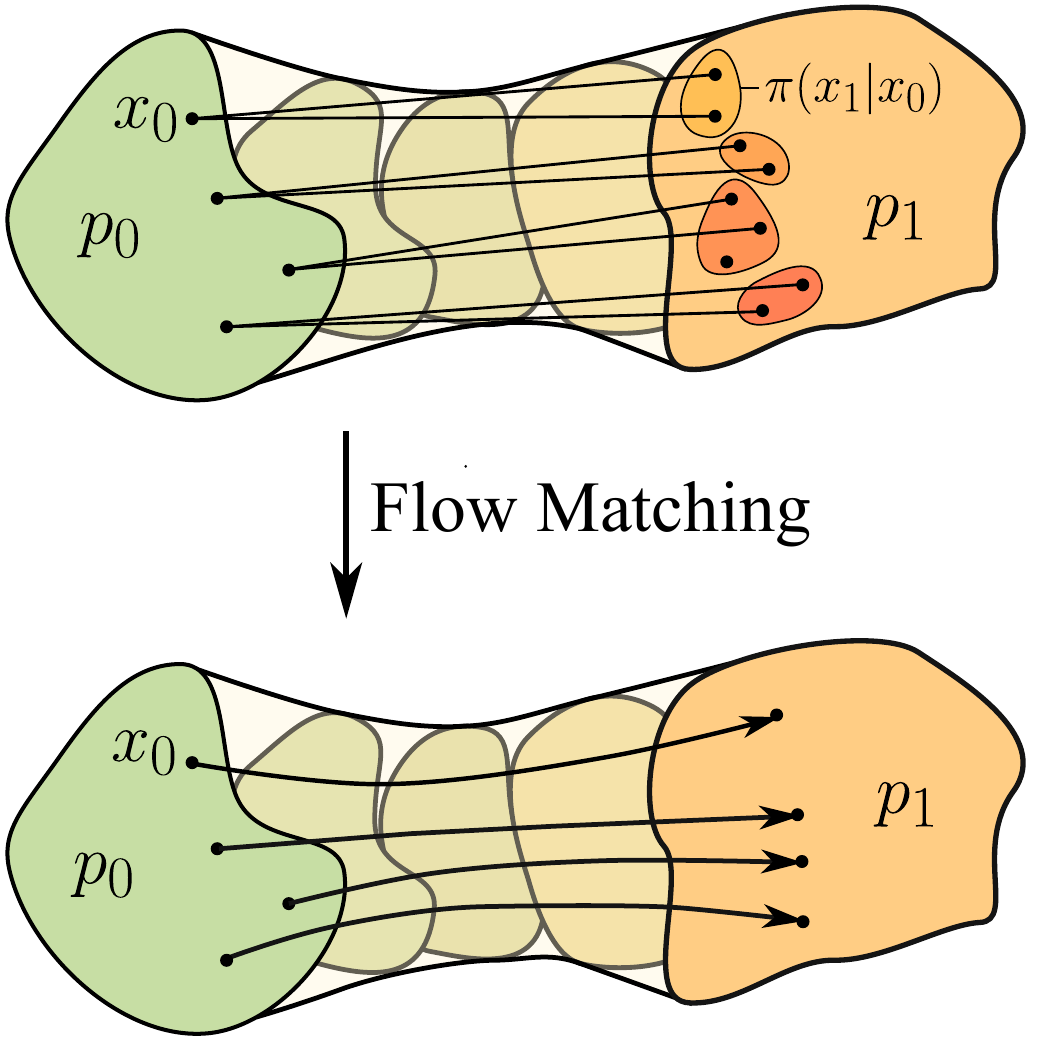}
    \caption{OT-CFM uses minibatch OT plan to obtain more straight trajectories.}
    \label{fig:OT-CFM}
    
\end{wrapfigure}
The main drawback of OT-CFM is that it recovers only biased dynamic OT solution. 
In order to converge to the true transport plan the batch
size should be large \citep{bernton2019parameter}, while with a growth of batch size computational time increases drastically \citep{tupitsa2022numerical}.  
In practice, batch sizes that ensure approximation good enough for applications are nearly infeasible to work with.  
\vspace{-2mm}
\subsubsection{Rectified Flow (RF)}\label{sec:RF}
\vspace{-2mm}

In  \cite{liu2023flow}, the authors propose an iterative approach to refine the plan $\pi$, straightening the trajectories more and more with each iteration. Formally, Flow Matching procedure denoted by $\textsc{FM}$  takes the transport plan $\pi$ as input and returns an optimal flow map via solving \eqref{eq:rectflow min}:
\begin{equation}\label{eq:rectflow proc}
\phi^{\pi} := \textsc{FM}(\pi). 
\end{equation}
One can iteratively apply $\textsc{FM}$ to the initial transport plan (e.g., the independent plan), gradually rectifying it. Namely, Rectified Flow Algorithm on $K$-th iteration has the following update rule 
\begin{eqnarray}\label{eq:reflow}
    \phi^{K + 1} &=& \textsc{FM}(\pi^K), \quad \pi^{K+1} = [\text{id},  \phi^{K + 1}]\#p_0, 
\end{eqnarray}
where $\phi^{K}, \pi^K$ denote flow map and transport plan on $K$-th iteration, respectively. 

With each new FM iteration, the generated trajectories $\{\{z_t\}_{t \in [0,1]}\}^K$ provably become more and more straight, i.e., error in approximation  $z^K_t \approx (1-t) z^K_0 + t z^K_1, \forall t \in [0,1]$ decreases as the number of iterations $K$ grows. The concept of RF is depicted on Figure \ref{fig:RF}. 
\begin{figure}[!h]
    \centering
    \includegraphics[width=\linewidth]{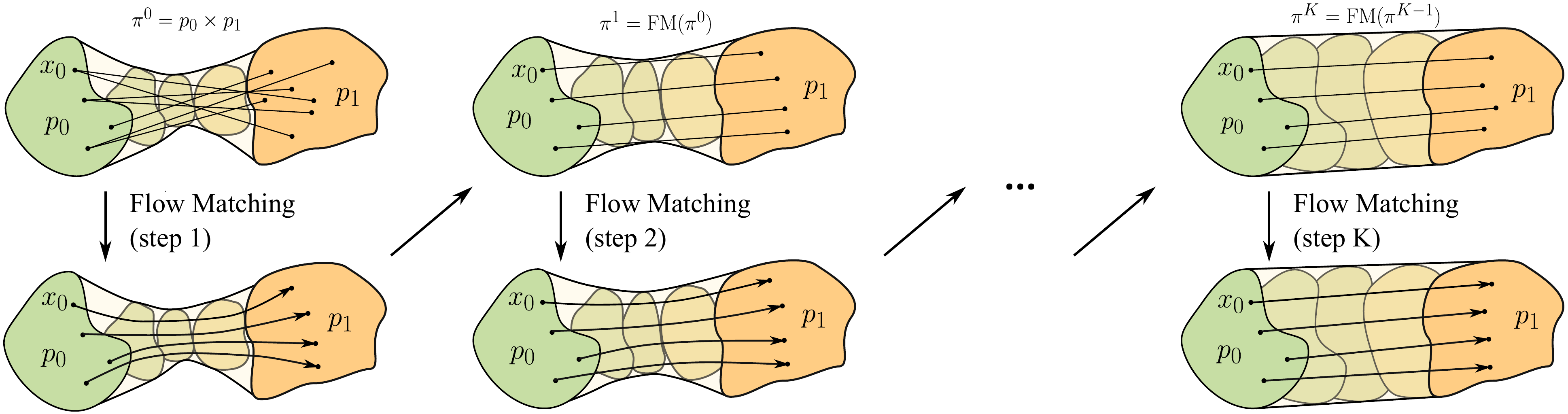}
    \caption{Rectified Flow iteratively applies FM to straighten the trajectories after each step.}
    \label{fig:RF}
\end{figure}

The authors also notice that for any convex cost function $c$ the flow map $\phi_1^{\pi}$ from Flow Matching yields lower or equal transport cost than initial transport plan $\pi$:
    \begin{equation}\label{eq:cost reduce}
        \int_{\R^D}c(x_0, \phi_1^{\pi}(x_0)) p_0(x_0) dx_0 \leq \int_{\R^D \times \R^D} c(x_0, x_1) \pi(x_0, x_1) dx_0dx_1. 
    \end{equation}
Intuitively, the transport costs are guaranteed to decrease because the trajectories of FM as solutions of well-defined ODE do not intersect each other, even if the initial lines connecting $x_0$ and $x_1$ can. 
With each iteration of RF \eqref{eq:reflow}, transport costs for all convex cost functions do not increase, but, for a given cost function, convergence to its own OT plan is not guaranteed. In \cite{liu2022rectified}, the authors address this issue and, for any particular convex cost function $c$, modify Rectified Flow to converge to OT map for $c$. In this modification, called $c$-Rectified Flow ($c$-RF), the authors  slightly change the FM training objective and restrict the optimization domain only to potential vector fields $u_t(\cdot) = \nabla \overline{c} (\nabla f_t(\cdot)) $, where $f_t(\cdot): \R^D \to \R$ is an arbitrary time-dependent scalar valued function and $\overline{c}$ is the convex conjugate of the cost function $c$.  In case of the quadratic cost function, the training objective remains the same, and the vector field $u_t$ is set as the simple gradient  $\nabla f_t(\cdot)$ of the scalar valued function $f_t$.

Unfortunately, in practice, with each iteration ($c$-)RF accumulates error caused by inexactness from previous iterations, the issue mentioned in \cite[\S 6, point 3]{liu2022rectified}.
Due to neural approximations, we can not get exact solution of FM (e.g., $\phi_1^{K}\#p_0 \neq p_1$), and this inexactness only grows with iterations. In addition, training of ($c$-)RF becomes non-simulation free after the first iteration, since to calculate the plan $\pi^{K+1} = [\text{id},  \phi^{K + 1}]\#p_0$ it has to integrate ODE.

\vspace{-2mm}
\section{Optimal Flow Matching (OFM)}\label{sec:OFM}
\vspace{-2mm}
In this section, we provide the design of our novel Optimal Flow Matching algorithm \eqref{alg:OFM} that fixes main problems of Rectified Flow and OT-CFM approaches described above. In theory, it obtains exactly \textbf{straight trajectories} and recovers the unbiased optimal transport map for the quadratic cost \textbf{just in one FM iteration} with \textbf{any} initial transport plan. Moreover, during inference, our OFM does not require solving ODE to transport points.

We discuss the theory behind our approach  ($\S$\ref{sec:deriving loss}), its practical implementation aspects  ($\S$\ref{sec:practice}) and the relation to prior works ($\S$\ref{sec:prior}). All \underline{our proofs} are located in Appendix~\ref{sec:proofs}.

\vspace{-2mm}
\subsection{Theory: Deriving the Optimization Loss}\label{sec:deriving loss}
\vspace{-2mm}
We want to design a method of moving distribution $p_0$ to $p_1$ via exactly straight trajectories. Namely, we aim to obtain straight paths from the solution of the dynamic OT \eqref{eq:dynamic ot}.
Moreover, we want to limit ourselves to just one minimization iteration. Hence, we propose our novel Optimal Flow Matching (OFM) procedure satisfying the above-mentioned conditions. The main idea of our OFM is to minimize the Flow Matching loss \eqref{eq:rectflow min} not over all possible vector fields $u$, but only over specific \textit{optimal} ones, which yield straight paths by construction and include the desired  dynamic OT field $u^*$.

\textbf{Optimal vector fields.} We say that a vector field $u^\Psi$ is optimal if it generates linear trajectories  $\{\{z_t\}_{t\in [0,1]}\}$ such that there exist a convex function $\Psi: \R^D \to \R$, which for any path $\{z_t\}_{t\in[0,1]}$ pushes the initial point $z_0$ to the final one as $z_1 = \nabla \Psi(z_0)$, i.e.,
\begin{eqnarray}
    z_t &=& (1-t) z_0  + t \nabla \Psi(z_0), \quad t \in [0,1].\notag
\end{eqnarray}
The function $\Psi$ defines the ODE
\begin{eqnarray}
    dz_t &=& (\nabla \Psi(z_0) - z_0)dt, \quad  z_t|_{t=0} = z_0.\label{eq:convex traj ODE} 
\end{eqnarray}
Equation \eqref{eq:convex traj ODE} does not provide a closed formula for 
 $u^\Psi$ as it depends on $z_0$. The explicit formula is constructed as follows: for a time $t \in [0,1]$ and point $x_t$, we can find a trajectory $\{z_t\}_{t \in [0,1]}$ s.t.
 \begin{equation}\label{eq:init point eq}
     x_t = z_t = (1-t) z_0  + t \nabla \Psi(z_0)
 \end{equation}

\begin{wrapfigure}{r}{0.43\textwidth}
\vspace{-5mm}
\hspace*{2mm}\begin{subfigure}[b]{0.97\linewidth}
\centering
\includegraphics[width=1.0\linewidth]{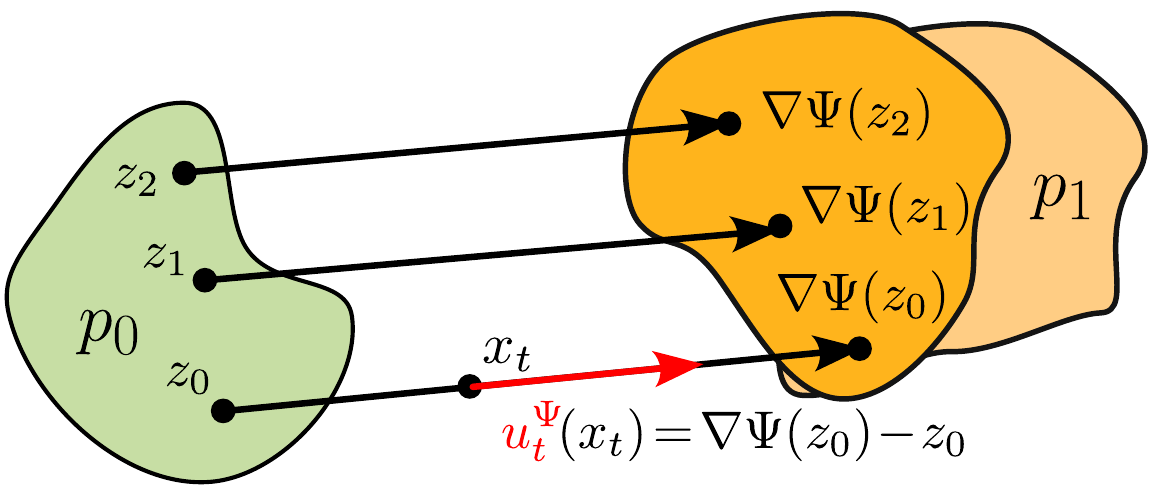}
\vspace{0.51mm}
\end{subfigure}
\vspace{-7mm}\caption{\centering \small An Optimal Vector Field: a vector field $u^\Psi$ with straight paths is parametrized by a gradient of a convex function $\Psi$.}
\label{fig:optimal fields}
\vspace{-5mm}
\end{wrapfigure}

 and recover the initial point $z_0$. We postpone the solution of this problem to $\S$\ref{sec:practice}. For now, we  define the inverse of flow map \eqref{eq:push-forward operator} as $(\phi^\Psi_t)^{-1} (x_t) := z_0$ and the vector field $u_t^\Psi(x_t) := \nabla \Psi(z_0) - z_0 = \nabla \Psi((\phi^\Psi_t)^{-1} (x_t)) - (\phi^\Psi_t)^{-1} (x_t) $, which generates ODE \eqref{eq:convex traj ODE}, i.e., $dz_t = u^\Psi_t(z_t)dt.$ The concept of optimal vector fields is depicted on Figure \ref{fig:optimal fields}. 
 
We highlight that the solution of dynamic OT lies in the class of optimal vector fields, since it generates linear trajectories \eqref{eq:dynamic ot linear traj} with the Brenier potential $\Psi^*$ \eqref{eq:opt map via grad}.

 \textbf{Training objective.}  
Our Optimal Flow Matching (OFM) approach is as follows: we restrict the optimization domain of FM  \eqref{eq:rectflow min} with fixed plan $\pi$ only to the optimal vector fields. We put the formula for the vector field $u_{\Psi}$ into FM loss from \eqref{eq:rectflow min} and define our Optimal Flow Matching loss:
\begin{eqnarray}\label{eq:OFM loss} 
    \mathcal{L}_{OFM}^\pi(\Psi) \!\!&:=& \!\!\mathcal{L}_{FM}^\pi(u^\Psi)\!\!=\!\!  \int\limits_0^1\!\!\left\{\int\limits_{\R^D \times \R^D}  \!\!  \|   u^{\Psi}_t(x_t) - (x_1 - x_0)\|^2 \pi(x_0, x_1) dx_0 dx_1\!\!\right\}\!\!dt, \\
     x_t &=& (1-t)  x_0 + t x_1. \notag
\end{eqnarray}

Our Theorem \ref{thm:loss equiv} states that OFM solves the dynamic OT via single FM minimization for any initial $\pi$.
\begin{theorem}[OFM and OT connection]\label{thm:loss equiv}
    Consider two distributions  $p_0, p_1 \in \mathcal{P}_{ac, 2}(\R^D)$ and \textbf{any} transport  plan $\pi \in \Pi(p_0, p_1)$ between them. Then, the dual Optimal Transport loss $\mathcal{L}_{OT}$ \eqref{eq:dual formulation} and Optimal Flow Matching loss $\mathcal{L}^\pi_{OFM} $ \eqref{eq:OFM loss}  have \textbf{the same minimizers}, i.e.,
    \begin{equation}
        \argmin\limits_{\text{convex }\Psi } \mathcal{L}^\pi_{OFM}(\Psi) = \argmin\limits_{\text{convex }\Psi } \mathcal{L}_{OT}(\Psi).\notag
    \end{equation}
\end{theorem}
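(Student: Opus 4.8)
The plan is to compute the Optimal Flow Matching loss $\mathcal{L}^\pi_{OFM}(\Psi)$ explicitly and show it equals $\mathcal{L}_{OT}(\Psi)$ up to an additive constant independent of $\Psi$; once this is established, the equality of argmin sets is immediate. First I would fix a convex $\Psi$ and rewrite the inner integrand of \eqref{eq:OFM loss}. The key observation is that the optimal vector field has the form $u^\Psi_t(x_t) = \nabla\Psi(z_0) - z_0$ where $z_0 = (\phi^\Psi_t)^{-1}(x_t)$ is constant along each trajectory; so for a fixed trajectory the velocity is a fixed vector and the time integral $\int_0^1 \|u^\Psi_t(z_t) - (x_1-x_0)\|^2\,dt$ decomposes. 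The main subtlety is that the expectation in \eqref{eq:OFM loss} is taken over $(x_0,x_1)\sim\pi$ with $x_t$ the \emph{linear} interpolant, which in general is \emph{not} a point on a $u^\Psi$-trajectory; so I cannot simply reparametrize by trajectories of $u^\Psi$. Instead I would use the marginal-preserving structure: for each fixed $t$, the law of $x_t=(1-t)x_0+tx_1$ is some distribution $p_t^\pi$, and I would push the computation through by a change of variables relating $x_t$ to the initial point $z_0$ of the $u^\Psi$-trajectory passing through $x_t$ at time $t$.

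Concretely, I would expand the square:
\begin{equation}
\|u^\Psi_t(x_t) - (x_1-x_0)\|^2 = \|u^\Psi_t(x_t)\|^2 - 2\la u^\Psi_t(x_t), x_1-x_0\ra + \|x_1-x_0\|^2.\notag
\end{equation}
The last term integrates to a $\Psi$-independent constant (it is $2\,\mathbb{E}_\pi\|x_1-x_0\|^2 \cdot \tfrac12$, part of $\textsc{Const}$). For the first term, I would use that $\int_0^1\int \|u^\Psi_t(x_t)\|^2 p_t^\pi(x_t)\,dx_t\,dt = \int_0^1\int \|u^\Psi_t(x_t)\|^2 p_t^{u^\Psi}(x_t)\,dx_t\,dt$ \emph{is not} automatic — here is where I expect the real work. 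The cleanest route is the following reduction: the FM loss $\mathcal{L}^\pi_{FM}(u)$ over \emph{all} fields $u$ is minimized by the conditional-expectation field $u^\pi_t(x) = \mathbb{E}_\pi[x_1-x_0 \mid x_t = x]$, and by the bias–variance decomposition
\begin{equation}
\mathcal{L}^\pi_{FM}(u) = \mathcal{L}^\pi_{FM}(u^\pi) + \int_0^1\!\!\int_{\R^D} \|u_t(x) - u^\pi_t(x)\|^2 p_t^\pi(x)\,dx\,dt.\notag
\end{equation}
So $\mathcal{L}^\pi_{OFM}(\Psi)$ differs from a $\Psi$-independent constant by $\int_0^1\!\int \|u^\Psi_t(x) - u^\pi_t(x)\|^2 p_t^\pi(x)\,dx\,dt$, which measures an $L^2(p_t^\pi)$-distance between the candidate optimal field and the true conditional field. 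I would then argue this quantity is minimized (and the minimizers coincide with those of $\mathcal{L}_{OT}$) precisely when $\nabla\Psi$ is the Brenier map; the natural tool is to relate $\int_0^1\!\int \|u^\Psi_t(x)\|^2 p_t^\pi(x)\,dx\,dt$ to the Benamou–Brenier kinetic energy $\int_0^1\!\int \|u^\Psi_t(x)\|^2 p_t^{u^\Psi}(x)\,dx\,dt$, which for straight-line fields equals $\mathbb{E}_{z_0\sim p_0}\|\nabla\Psi(z_0)-z_0\|^2$, and this in turn is exactly $\mathcal{L}_{OT}(\Psi)$ up to the same constant after adding and subtracting the cross term $\la\nabla\Psi(z_0)-z_0, z_0\ra$ and invoking the duality identity \eqref{eq:dual formulation}.

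The main obstacle, which I would isolate as a lemma, is proving that the two energy integrals agree, i.e.
\begin{equation}
\int_0^1\!\!\int_{\R^D} \|u^\Psi_t(x)\|^2\, p_t^\pi(x)\,dx\,dt \;=\; \int_0^1\!\!\int_{\R^D} \|u^\Psi_t(x)\|^2\, p_t^{u^\Psi}(x)\,dx\,dt,\notag
\end{equation}
so that the $\pi$-dependence drops out. This is plausible because $\|u^\Psi_t(x)\|^2$, being constant along $u^\Psi$-trajectories, can be written as a function $g(z_0)$ of the trajectory label, and the label distribution induced at time $t$ by $p_t^\pi$ via $(\phi^\Psi_t)^{-1}$ should coincide with $p_0$ for every $t$ — this requires a careful change-of-variables / mass-conservation argument and is the only place where convexity of $\Psi$ (guaranteeing $(\phi^\Psi_t)^{-1}$ is well defined for all $t\in[0,1]$, since the straight paths $(1-t)z_0 + t\nabla\Psi(z_0)$ do not cross by monotonicity of $\nabla\Psi$) is genuinely used. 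After that lemma, everything reduces to the static duality \eqref{eq:dual formulation}–\eqref{eq:opt map via grad}: $\mathcal{L}^\pi_{OFM}(\Psi) = \mathcal{L}_{OT}(\Psi) + \textsc{Const}(p_0,p_1)$, whence the argmin sets coincide.
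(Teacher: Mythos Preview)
Your bias--variance decomposition of $\mathcal{L}^\pi_{FM}$ is correct, but the subsequent reduction fails. The proposed key lemma
\[
\int_0^1\!\!\int_{\R^D} \|u^\Psi_t(x)\|^2\, p_t^\pi(x)\,dx\,dt \;=\; \int_0^1\!\!\int_{\R^D} \|u^\Psi_t(x)\|^2\, p_t^{u^\Psi}(x)\,dx\,dt
\]
is \emph{false}. Your heuristic is that $(\phi^\Psi_t)^{-1}\#p_t^\pi = p_0$ for every $t$, but at $t=1$ this would say $\nabla\overline{\Psi}\#p_1 = p_0$, which holds only for the Brenier potential $\Psi^*$, not for a generic convex $\Psi$. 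A concrete counterexample: take $p_0=p_1=\mathcal{N}(0,I)$, $\pi=p_0\times p_1$, and $\Psi(x)=\|x\|^2$ (so $\nabla\Psi(x)=2x$). Then $u^\Psi_t(x)=x/(1+t)$, the left side equals $D\!\int_0^1\frac{(1-t)^2+t^2}{(1+t)^2}dt = D(9/2 - 6\ln 2)\approx 0.34D$, while the right side equals $D$. The $\pi$-independence of the minimizer is \emph{not} because the energy term is $\pi$-independent; rather, the energy and the cross term each depend on $\pi$ in a way that cancels only after summation, and there is no evident way to see this cancellation from your decomposition.

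The paper's route is quite different and bypasses all distributional manipulation: it proves a \emph{pointwise} identity in $(x_0,x_1)$,
\[
\int_0^1 \|u^\Psi_t(x_t) - (x_1-x_0)\|^2\,dt \;=\; 2\big[\Psi(x_0) + \overline{\Psi}(x_1) - \la x_0,x_1\ra\big],
\]
valid for every fixed pair before any expectation is taken. The proof first rewrites the integrand as $\|z_0(t)-x_0\|^2/t^2$ with $z_0(t)=(\phi^\Psi_t)^{-1}(x_t)$ (your simplified loss), then performs a change of variable $s=t/(1-t)$ and integrates along the curve $t\mapsto z_0(t)$ (which runs from $x_0$ to $\nabla\overline{\Psi}(x_1)$), reducing everything to exact differentials and the Fenchel--Young equality. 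Once this pointwise identity is in hand, taking $\mathbb{E}_\pi$ immediately gives $\mathcal{L}^\pi_{OFM}(\Psi)=2\mathcal{L}_{OT}(\Psi)-2\mathbb{E}_\pi\la x_0,x_1\ra$, and the argmin statement follows. The essential idea you are missing is that the cancellation happens at the level of individual pairs $(x_0,x_1)$, not at the level of marginals $p_t^\pi$.
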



\vspace{-2mm}
\subsection{Practical implementation aspects}\label{sec:practice}
\vspace{-2mm}
In this subsection, we explain the details of optimization of  our Optimal Flow Matching loss \eqref{eq:OFM loss}.

\textbf{Parametrization of $\Psi$.} In practice, we parametrize the class of convex functions with Input Convex Neural Networks (ICNNs) \cite{amos2017input} $\Psi_\theta$  and parameters $\theta$. These
are scalar-valued neural networks built in such a way that the network is convex in its input. They consist of fully-connected or convolution blocks, some weights of which are set to be non-negative in order to keep the convexity. In addition, activation functions are considered to be only non-decreasing and convex in each input coordinate. 
These networks are able to support most of the popular training techniques (e.g., gradient descent optimization, dropout, skip connection, etc.). In Appendix \ref{sec:app exp}, we discuss the used \underline{architectures}.

\textbf{OFM loss calculation.} 
We provide an explicit formula for gradient of OFM loss \eqref{eq:OFM loss}.


\begin{proposition}[Explicit Loss Gradient Formula]\label{rmk:loss in practice}
 The gradient of $\mathcal{L}^\pi_{OFM}$ can be calculated as 
 \begin{eqnarray}
    z_0 &=& \textsc{NO-GRAD}\left\{(\phi^{\Psi_\theta}_t)^{-1}(x_t)\right\}, \notag 
 \end{eqnarray} 
 $$\frac{ d\mathcal{L}^\pi_{OFM}}{d \theta} := \frac{ d}{d \theta} \EE_{t; x_0,x_1 \sim \pi}\!\!\left\la  \textsc{NO-GRAD}\left\{ 2 \left( t \nabla^2 \Psi_{\theta}(z_0)  
 +  (1-t) I\right)^{-1} \frac{(x_0 - z_0)}{t} \right\},   \nabla \Psi_\theta(z_0) \right\ra,$$
 where variables under $\textsc{NO-GRAD}$ remain constants during differentiation. 

\end{proposition}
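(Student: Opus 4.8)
The plan is to differentiate the OFM loss $\mathcal{L}^\pi_{OFM}(\Psi_\theta)$ directly, being careful about which quantities depend on $\theta$ through the flow-map inverse and which can be treated as constants by the envelope/chain-rule structure. First I would fix $t$ and a sample $(x_0,x_1)\sim\pi$, set $x_t=(1-t)x_0+tx_1$, and recall the defining relation $x_t=(1-t)z_0+t\nabla\Psi_\theta(z_0)$, where $z_0=(\phi^{\Psi_\theta}_t)^{-1}(x_t)$. The integrand is $\|u^{\Psi_\theta}_t(x_t)-(x_1-x_0)\|^2$ with $u^{\Psi_\theta}_t(x_t)=\nabla\Psi_\theta(z_0)-z_0$. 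Using $x_1-x_0 = (x_t-(1-t)z_0)/t - (x_t - z_0\cdot 0\ldots)$ — more cleanly, since $x_1-x_0=(x_t-x_0)/t$ and $x_t=(1-t)z_0+t\nabla\Psi_\theta(z_0)$, one gets the useful identity $u^{\Psi_\theta}_t(x_t)-(x_1-x_0) = \nabla\Psi_\theta(z_0)-z_0 - (x_1-x_0)$, and substituting $x_t$ shows this difference equals $\tfrac{1}{t}\big(z_0 - x_0\big)$ up to the algebra relating $x_0,x_t,z_0$. I would record this simplification first, because it collapses the squared term into $\tfrac{1}{t^2}\|z_0-x_0\|^2$ (or the analogous expression), so the loss becomes an expectation of a simple quadratic in $z_0$.

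Next I would differentiate $\|z_0-x_0\|^2$ with respect to $\theta$, where the only $\theta$-dependence left is through $z_0 = z_0(\theta)$. By the chain rule, $\tfrac{d}{d\theta}\|z_0-x_0\|^2 = 2\langle z_0-x_0, \tfrac{dz_0}{d\theta}\rangle$. The Jacobian $\tfrac{dz_0}{d\theta}$ is obtained by implicit differentiation of the constraint $x_t=(1-t)z_0+t\nabla\Psi_\theta(z_0)$ (with $x_t$ fixed, independent of $\theta$): differentiating gives $0 = (1-t)\tfrac{dz_0}{d\theta} + t\big(\nabla^2\Psi_\theta(z_0)\tfrac{dz_0}{d\theta} + \partial_\theta\nabla\Psi_\theta(z_0)\big)$, hence $\tfrac{dz_0}{d\theta} = -\big(t\nabla^2\Psi_\theta(z_0)+(1-t)I\big)^{-1} t\,\partial_\theta\nabla\Psi_\theta(z_0)$. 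The matrix $t\nabla^2\Psi_\theta(z_0)+(1-t)I$ is invertible for $t\in(0,1)$ since $\nabla^2\Psi_\theta\succeq 0$ by convexity. Substituting this into the chain rule and absorbing the $\partial_\theta\nabla\Psi_\theta(z_0)$ factor into a $\tfrac{d}{d\theta}\nabla\Psi_\theta(z_0)$ written outside a \textsc{NO-GRAD} wrapper yields exactly the claimed formula: the vector $2\big(t\nabla^2\Psi_\theta(z_0)+(1-t)I\big)^{-1}\tfrac{x_0-z_0}{t}$ is the frozen multiplier, paired against $\nabla\Psi_\theta(z_0)$ whose $\theta$-gradient supplies $\partial_\theta\nabla\Psi_\theta(z_0)$. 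Finally I would justify interchanging $\tfrac{d}{d\theta}$ with $\mathbb{E}_{t;x_0,x_1}$ via dominated convergence under mild regularity of $\Psi_\theta$.

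The main obstacle I anticipate is the bookkeeping around \textsc{NO-GRAD}: one must verify that treating $z_0$ and the Hessian-inverse prefactor as constants, while letting differentiation act only on the explicit $\nabla\Psi_\theta(z_0)$ slot, reproduces the \emph{total} derivative and not just a partial one. This works precisely because of the implicit-function computation above — the "missing" term $2\langle z_0-x_0,\tfrac{dz_0}{d\theta}\rangle$ is algebraically identical to $\langle 2(t\nabla^2\Psi_\theta+(1-t)I)^{-1}\tfrac{x_0-z_0}{t}, t\,\partial_\theta\nabla\Psi_\theta(z_0)\rangle$ after moving the symmetric matrix across the inner product, so the surrogate objective's autodiff gradient coincides with the true gradient. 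A secondary technical point is handling the $t\to 0$ and $t\to 1$ endpoints (where the $1/t$ factor or the conditioning degrades); I would note that the time integral is over $[0,1]$ and the apparent singularities are integrable, or simply that in practice $t$ is sampled from the open interval.
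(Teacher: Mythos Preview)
Your proposal is correct and follows essentially the same route as the paper: simplify the integrand to $\tfrac{1}{t^2}\|z_0-x_0\|^2$, apply the chain rule through $z_0=z_0(\theta)$, compute $\tfrac{dz_0}{d\theta}$ by implicitly differentiating $x_t=(1-t)z_0+t\nabla\Psi_\theta(z_0)$, and then move the symmetric matrix across the inner product to isolate $\partial_\theta\nabla\Psi_\theta(z_0)$. The paper's proof is identical in structure (it invokes the simplified loss as a separate proposition first), and your additional remarks on the \textsc{NO-GRAD} bookkeeping and the $t\in\{0,1\}$ endpoints only add clarity.
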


\textbf{Flow map inversion.} In order to find the initial point $z_0 = (\phi^\Psi_t)^{-1}(x_t)$, we note that \eqref{eq:init point eq}
$$x_t = (1-t)  z_0 + t \nabla \Psi(z_0)$$
is equivalent to 
 \begin{eqnarray}
     \nabla \left(\frac{(1-t)}{2} \| \cdot\|^2 + t \Psi (\cdot)  - \la x_t , \cdot \ra \right) (z_0) = 0. \notag
 \end{eqnarray}
The function under gradient operator $\nabla$ has minimum at the required point $z_0$, since at $z_0$ the gradient of it equals $0$. If $ t < 1$ the function is at least $(1-t)$-strongly convex, and the minimum is unique. The case $t = 1$ is negligible in practice, since it has zero probability to appear during training.

We can reduce the problem of inversion to the following minimization subproblem
 \begin{eqnarray}\label{eq:min subtask}
     (\phi^\Psi_t)^{-1}(x_t) = \arg\min_{z_0 \in \R^D} \left[ \frac{(1-t)}{2} \| z_0\|^2 + t \Psi (z_0) - \la x_t , z_0 \ra \right].
 \end{eqnarray} 
Optimization subproblem \eqref{eq:min subtask} is at least \textbf{$(1-t)$-strongly convex} and can be effectively solved  for any given point $x_t$ (in comparison with typical non-convex optimization tasks). 

\textbf{Algorithm.}  The Optimal Flow Matching pseudocode is presented in listing \ref{alg:OFM}. We estimate math expectation over plan $\pi$ and time $t$ with uniform distribution on $[0,1]$ via unbiased Monte Carlo.

\begin{algorithm}[ht!]
\caption{\algname{Optimal Flow Matching} }
\label{alg:OFM}   
\begin{algorithmic}[1]
\REQUIRE Initial transport plan $\pi \in \Pi(p_0, p_1)$, number of iterations $K$,  batch size $B$, optimizer $Opt$, sub-problem optimizer $SubOpt$, ICNN $\Psi_\theta$ 
\FOR{$k=0,\ldots, K-1$}
\STATE Sample batch $\{(x^i_0, x^i_1)\}_{i=1}^B$ of size $B$ from plan $\pi$;
\STATE Sample times batch $\{t^i\}_{i=1}^B$ of size $B$ from $U[0,1]$;
\STATE Calculate linear interpolation $x^i_{t^i} = (1-t^i)x^i_0 + t^i x^i_1$ for all $i \in \overline{1,B}$;
\STATE Find the initial points $z^i_0$ via solving the convex problem with $SubOpt$:

$$z^i_0 = \textsc{NO-GRAD} \left\{ \arg\min_{z^i_0} \left[ \frac{(1-t^i)}{2} \| z^i_0\|^2 + t^i \Psi_\theta (z^i_0) - \la x^i_{t^i} , z^i_0 \ra \right]\right \};$$
\STATE Calculate loss $\hat{\mathcal{L}}_{OFM}$ 
$$\hat{\mathcal{L}}_{OFM} = \frac{1}{B} \sum_{i=1}^B \left\la  \textsc{NO-GRAD} \left\{ 2 \left( t^i \nabla^2 \Psi_{\theta}(z^i_0)  
 +  (1-t^i) I \right)^{-1} \frac{(x^i_0 - z^i_0)}{t^i} \right\},   \nabla \Psi_\theta(z^i_0) \right\ra;$$

\STATE Update parameters $\theta$ via optimizer $Opt$ step with $\frac{d \hat{\mathcal{L}}_{OFM}}{d \theta}$;  
\ENDFOR 
\end{algorithmic}
\end{algorithm}

\subsection{Relation to Prior Works}\label{sec:prior}
\vspace{-2mm}
In this subsection, we compare our Optimal Flow Matching and previous straightening approaches. One unique feature of OFM is that it works only with flows which have straight paths by design and does not require ODE integration to transport points. Other methods may result in non-straight paths during training, and they still have to solve ODE even with near-straight paths.   

\textbf{OT Solvers} \cite{taghvaei20192,makkuva2020optimal, amos2023on}. According to Theorem \ref{thm:loss equiv}, our OFM and dual OT solvers basically minimize the same OT loss \eqref{eq:dual formulation}. However, our OFM actively utilizes the temporal component of the dynamic process. It allows us to pave a novel theoretical bridge between OT and FM. Such a direct connection can lead to the adoption of the strengths of both methods and a deeper understanding of them.

\textbf{OT-CFM} \cite{pooladian2023multisample,tong2024improving}. Unlike our OFM approach, OT-CFM method retrieves biased OT solution, and the recovery of straight paths is not guaranteed. In OT-CFM, minibatch OT plan appears as a heuristic that helps to get better trajectories in practice. In contrast, usage of  \textbf{any} initial transport plan $\pi$ in our OFM is completely justified in Theorem \ref{thm:loss equiv}.

\textbf{Rectified Flow} \cite{liu2023flow,liu2022rectified}. In Rectified Flows \cite{liu2023flow}, the authors iteratively apply Flow Matching to refine the obtained trajectories. However, in each iteration, RF accumulates error since one may not learn the exact flow due to neural approximations. In addition, RF does not guarantee convergence to the OT plan for the quadratic cost. The $c$-Rectified Flow \cite{liu2022rectified} modification  can converge to the OT plan for any cost function $c$, but still remains iterative. In addition, RF and $c$-RF both requires ODE simulation after the first iteration to continue training. In OFM, we work only with the quadratic cost function, but retrieve its OT solution in \textbf{just one FM iteration} without simulation of the trajectories. 

\textbf{Light and Optimal Schrödinger Bridge.} In \cite{gushchin2024light}, the authors observe the relation between Entropic Optimal Transport (EOT) \cite{leonard2013survey, chen2016relation} and Bridge Matching (BM) \cite{shi2024diffusion} problems. These are stochastic analogs of OT and FM, respectively. In EOT and BM, instead of deterministic ODE and flows, one considers stochastic processes with non-zero stochasticity. The authors prove that, during BM, one can restrict considered processes only to the specific ones and retrieve the solution of EOT. 
Hypothetically, our OT/FM case is a limit of their EOT/BM case when the stochasticity tends to zero. Proofs in \cite{gushchin2024light} for EOT are based on sophisticated KL divergence properties. We do not know whether our results for OFM can be derived by taking the limit of their stochastic case. To derive the properties of our OFM, we use \textbf{completely different proof techniques} based on computing integrals over curves rather than KL-based techniques. Besides, in practice, the authors of \cite{gushchin2024light} mostly focus on Gaussian mixture parametrization while our method allows using neural networks (ICNNs).
\vspace{-2mm}

\subsection{Theory: properties of OFM}

In this subsection, we provide the OFM's theoretical properties, which give an intuition for understanding of its main working principles and behavior. 

\begin{proposition}[Simplified OFM Loss]\label{prop:ofm_simplified_loss} We can simplify \eqref{eq:OFM loss} to a more suitable form: 
\begin{equation}\label{eq:OFM loss simple}
    \mathcal{L}^\pi_{OFM}(\Psi)\!\! = \!\!   \int\limits_0^1 \left \{  \int\limits_{\R^D \times \R^D} \left|\left|\frac{(\phi^\Psi_t)^{-1} (x_t) - x_0 }{t}\right|\right|^2 \pi(x_0, x_1) dx_0 dx_1\right\} dt, x_t = (1-t)  x_0 + t x_1.
\end{equation}
\end{proposition}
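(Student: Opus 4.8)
The plan is to start from the definition \eqref{eq:OFM loss} and rewrite the integrand using the explicit description of the optimal vector field $u^\Psi$. Recall that for a fixed $t$ and a point $x_t$, the construction of $u^\Psi$ proceeds by finding the (unique, for $t<1$) initial point $z_0 = (\phi^\Psi_t)^{-1}(x_t)$ such that $x_t = (1-t)z_0 + t\nabla\Psi(z_0)$, and then setting $u^\Psi_t(x_t) = \nabla\Psi(z_0) - z_0$. So the first key step is to solve the relation $x_t = (1-t)z_0 + t\nabla\Psi(z_0)$ for $\nabla\Psi(z_0) - z_0$ purely in terms of $z_0$, $x_t$ and $t$: subtracting $z_0$ from both sides and rearranging gives $x_t - z_0 = t(\nabla\Psi(z_0) - z_0)$, hence $u^\Psi_t(x_t) = \nabla\Psi(z_0) - z_0 = \tfrac{1}{t}(x_t - z_0)$.

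The second step is to substitute this into $\|u^\Psi_t(x_t) - (x_1 - x_0)\|^2$ with $x_t = (1-t)x_0 + tx_1$. Then $x_t - z_0 = (1-t)x_0 + tx_1 - z_0$, so $\tfrac{1}{t}(x_t - z_0) - (x_1-x_0) = \tfrac{1}{t}\big[(1-t)x_0 + tx_1 - z_0 - t x_1 + t x_0\big] = \tfrac{1}{t}(x_0 - z_0)$. Therefore the integrand collapses to $\big\|\tfrac{1}{t}(x_0 - z_0)\big\|^2 = \big\|\tfrac{(\phi^\Psi_t)^{-1}(x_t) - x_0}{t}\big\|^2$, which is exactly the integrand in \eqref{eq:OFM loss simple}. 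Plugging this back under the double integral over $\pi(x_0,x_1)\,dx_0\,dx_1$ and the outer integral over $t\in[0,1]$ yields the claimed formula.

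The only subtlety worth a remark is well-definedness at the endpoint $t=1$ (and the $1/t$ factor near $t=0$): at $t=1$ the map $\phi^\Psi_1$ need not be invertible, but as noted in \S\ref{sec:practice} this is a measure-zero set of times and does not affect the value of the outer integral over $[0,1]$; near $t=0$ one checks that $(\phi^\Psi_t)^{-1}(x_t)\to x_0$ so that the ratio stays bounded (indeed $x_t\to x_0$ and $z_0\to x_0$, and the difference is $O(t)$ by the defining relation), so the integrand is integrable. I expect the main (minor) obstacle to be simply being careful with these boundary-in-$t$ integrability issues; the algebraic simplification itself is a one-line manipulation once the identity $u^\Psi_t(x_t) = \tfrac{1}{t}(x_t - z_0)$ is written down.
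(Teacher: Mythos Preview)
Your proposal is correct and follows essentially the same route as the paper: derive $u^\Psi_t(x_t) = \tfrac{1}{t}(x_t - z_0)$ from the defining relation $x_t = (1-t)z_0 + t\nabla\Psi(z_0)$, substitute into the FM integrand with $x_t = (1-t)x_0 + tx_1$, and simplify to $\tfrac{1}{t^2}\|z_0 - x_0\|^2$. Your extra remarks on the $t=0,1$ endpoints are a nice addition that the paper's proof does not spell out here.
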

The simplified form \eqref{eq:OFM loss simple} shows that OFM loss actually measures how well $\Psi$  restores initial points $x_0$ of linear interpolations depending on future point $x_t$ and time $t$.



\textbf{Generative properties of OFM.}
In this paragraph, we provide another view on our OFM approach. In our OFM, we aim to construct a vector field $u$ which is as close to the dynamic OT field $u^*$ as possible. We can use the least square regression to measure the distance between them:
\begin{equation}\label{eq:dist}
    \textsc{dist}(u, u^*) := \int_{0}^1 \int_{\R^D} \|u_{t} (x_t) - u_t^* (x_t)\|^2\underset{:= p^*_t(x_t)}{\underbrace{\phi_t^*\#p_0(x_t)}} dx_t dt.
\end{equation} 
\begin{proposition}[Intractable Distance]\label{prop:intractable dist}
    The distance $\textsc{dist}(u, u^*)$  between an arbitrary vector field $u$ and OT field $u^* $ equals to the FM loss from \eqref{eq:rectflow min} with the optimal plan $\pi^*$, i.e., 
    $$\textsc{dist}(u, u^*) = \mathcal{L}^{ \pi^*}_{FM}(u) -  \underset{=0}{\underbrace{\mathcal{L}^{\pi^*}_{FM}(u^*)}}.$$
\end{proposition}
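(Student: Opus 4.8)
\textbf{Proof proposal for Proposition~\ref{prop:intractable dist}.}
The plan is to expand the quadratic in the definition \eqref{eq:dist} of $\textsc{dist}(u,u^*)$ and compare it term by term with the FM loss $\mathcal{L}^{\pi^*}_{FM}(u)$ from \eqref{eq:rectflow min}. The crucial observation is that for the \emph{optimal} plan $\pi^*$, the trajectories produced by FM are exactly the straight lines $x_t = (1-t)x_0 + tx_1$ with $(x_0,x_1)\sim\pi^*$, and these coincide with the trajectories of the dynamic OT field $u^*$: by \eqref{eq:dynamic ot linear traj} and $\pi^* = [\mathrm{id},T^*]\#p_0 = [\mathrm{id},\nabla\Psi^*]\#p_0$, a point $x_t = (1-t)x_0 + t\nabla\Psi^*(x_0)$ lies on the $u^*$-trajectory starting at $x_0$, and along it $u^*_t(x_t) = \nabla\Psi^*(x_0) - x_0 = x_1 - x_0$. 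Hence the distribution $p^*_t = \phi^*_t\#p_0$ appearing as the weight in \eqref{eq:dist} is precisely the law of the interpolation $x_t$ under $(x_0,x_1)\sim\pi^*$ (this is the marginal-preserving property of FM applied to $\pi^*$), so the weighted integral over $x_t\sim p^*_t$ in \eqref{eq:dist} can be rewritten as an integral over $(x_0,x_1)\sim\pi^*$ of the integrand evaluated at $x_t = (1-t)x_0+tx_1$.

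With this change of variables, the first step is to note $\mathcal{L}^{\pi^*}_{FM}(u^*) = 0$: substituting $u^*_t(x_t) = x_1 - x_0$ into \eqref{eq:rectflow min} makes the integrand vanish identically. The second step is the expansion
\begin{equation}
\|u_t(x_t) - u^*_t(x_t)\|^2 = \|u_t(x_t) - (x_1-x_0)\|^2 + 2\la u_t(x_t) - (x_1-x_0),\, (x_1-x_0) - u^*_t(x_t)\ra + \|(x_1-x_0) - u^*_t(x_t)\|^2, \notag
\end{equation}
but since $u^*_t(x_t) = x_1 - x_0$ along these trajectories, the last two terms vanish and we get $\|u_t(x_t) - u^*_t(x_t)\|^2 = \|u_t(x_t) - (x_1-x_0)\|^2$ pointwise. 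Integrating against $\pi^*(x_0,x_1)\,dx_0dx_1$ over $t\in[0,1]$ then yields $\textsc{dist}(u,u^*) = \mathcal{L}^{\pi^*}_{FM}(u) = \mathcal{L}^{\pi^*}_{FM}(u) - \mathcal{L}^{\pi^*}_{FM}(u^*)$, which is the claim.

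The main obstacle is the rigorous justification of the identification $p^*_t(x_t)\,dx_t \leftrightarrow \pi^*(x_0,x_1)\,dx_0dx_1$ with $x_t = (1-t)x_0 + tx_1$ — that is, that $u^*$ as defined (an optimal vector field in the sense of $\S$\ref{sec:deriving loss}, with flow map inverse giving $z_0$) indeed has $\phi^*_t\#p_0$ equal to the law of the straight interpolation under $\pi^*$, and that the two notions of ``trajectory'' agree. This rests on: (i) Brenier's theorem giving $\pi^* = [\mathrm{id},\nabla\Psi^*]\#p_0$ for the quadratic cost (quoted in $\S$\ref{sec:static OT}), (ii) the fact that $t\mapsto (1-t)z_0 + t\nabla\Psi^*(z_0)$ are non-crossing curves so the flow map $\phi^*_t$ is well-defined and its inverse recovers $z_0$ (this is where one may need to invoke that $x_t = \nabla\left(\tfrac{1-t}{2}\|\cdot\|^2 + t\Psi^*\right)(z_0)$ with the right-hand function strongly convex for $t<1$, as in $\S$\ref{sec:practice}), and (iii) the marginal-preserving property of FM stated in $\S$\ref{sec:FM}. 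Once these are in place the computation is routine. I would also remark that Proposition~\ref{prop:intractable dist} explains the name ``intractable distance'': $\mathcal{L}^{\pi^*}_{FM}$ cannot be evaluated or minimized directly since $\pi^*$ is unknown, which is exactly the motivation for replacing it by $\mathcal{L}^{\pi}_{OFM}$ with an arbitrary $\pi$ in Theorem~\ref{thm:loss equiv}.
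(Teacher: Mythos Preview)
Your proposal is correct and follows essentially the same approach as the paper: both use Brenier's theorem to write $\pi^* = [\mathrm{id},\nabla\Psi^*]\#p_0$, observe that along the straight trajectories $x_t = (1-t)x_0 + t\nabla\Psi^*(x_0)$ one has $u^*_t(x_t) = x_1 - x_0$, and then perform the change of variables between integration over $(x_0,x_1)\sim\pi^*$ and integration over $x_t\sim p^*_t$. Your quadratic expansion is a slight detour (once $u^*_t(x_t)=x_1-x_0$ is established the two integrands are literally identical), and the paper runs the change of variables in the opposite direction (from $\mathcal{L}^{\pi^*}_{FM}$ to $\textsc{dist}$), but these are cosmetic differences.
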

\vspace{-2mm}
We can not minimize intractable $\textsc{dist}(u, u^*)$ since the optimal plan $\pi^*$ is unknown. In OT-CFM \cite{tong2024improving}, authors heuristically approximate $\pi^*$ in $\mathcal{L}_{FM}^{\pi^*}(u)$, but obtain biased solution. Surprisingly, for the \textit{optimal} vector fields, the distance can be calculated explicitly via \textbf{any} known plan $\pi$.

\begin{proposition}[Tractable Distance For OFM]\label{prop:dist for optimal}
    The distance $\textsc{dist}(u^{\Psi}, u^{\Psi^*})$ between  an \textbf{optimal} vector field $u^\Psi$ generated by a convex function $\Psi$ and the vector field  $u^{\Psi^*}$ with the Brenier potential $\Psi^*$ can be evaluated directly via OFM loss  \eqref{eq:OFM loss} and \textbf{any} plan $\pi$: 
    \begin{equation}
   \textsc{dist}(u^{\Psi}, u^{\Psi^*}) =  \mathcal{L}_{FM}^\pi(u^\Psi) - \mathcal{L}^\pi_{FM}(u^{\Psi^*}) = \mathcal{L}_{OFM}^\pi(\Psi) - \mathcal{L}^\pi_{OFM}(\Psi^*). \label{eq:fields diff bound}
    \end{equation}
\end{proposition}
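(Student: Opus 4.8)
The plan is to chain together two facts: first, that any optimal vector field $u^\Psi$ is \emph{marginal-preserving} in the FM sense, so that evaluating a squared-deviation integral against the interpolation measure of \emph{any} plan $\pi\in\Pi(p_0,p_1)$ gives the same answer; and second, Proposition~\ref{prop:intractable dist}, which already identifies $\textsc{dist}(u,u^*)$ with $\mathcal{L}^{\pi^*}_{FM}(u)$ up to the (zero) constant $\mathcal{L}^{\pi^*}_{FM}(u^*)$. The first equality in \eqref{eq:fields diff bound}, $\textsc{dist}(u^\Psi,u^{\Psi^*}) = \mathcal{L}_{FM}^\pi(u^\Psi) - \mathcal{L}^\pi_{FM}(u^{\Psi^*})$, is then just Proposition~\ref{prop:intractable dist} applied with $u=u^\Psi$ and $u^*=u^{\Psi^*}$ \emph{after} replacing $\pi^*$ by an arbitrary $\pi$ — which is legitimate precisely because of plan-independence. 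The second equality, $\mathcal{L}_{FM}^\pi(u^\Psi) = \mathcal{L}_{OFM}^\pi(\Psi)$ and likewise for $\Psi^*$, is true by the very definition \eqref{eq:OFM loss} of the OFM loss as the restriction of the FM loss to optimal fields.

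Concretely, I would proceed as follows. Step one: recall that $u^{\Psi^*}$ is the Brenier field whose flow map satisfies $\phi^{\Psi^*}_1\#p_0 = \nabla\Psi^*\#p_0 = p_1$, and more generally $\phi^{\Psi^*}_t\#p_0 = p^*_t$; this is exactly the dynamic-OT solution $u^*$ of \eqref{eq:dynamic ot}, so $u^{\Psi^*}=u^*$ and in particular $\mathcal{L}^{\pi^*}_{FM}(u^{\Psi^*})=0$ by Proposition~\ref{prop:intractable dist}'s bracketed remark. Step two: show that for \emph{any} optimal field $u^\Psi$ and \emph{any} $\pi\in\Pi(p_0,p_1)$, the quantity $\mathcal{L}^\pi_{FM}(u^\Psi)$ does not depend on $\pi$. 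This is the crux. Using the simplified form from Proposition~\ref{prop:ofm_simplified_loss}, $\mathcal{L}^\pi_{OFM}(\Psi) = \int_0^1\!\int \big\|\tfrac{(\phi^\Psi_t)^{-1}(x_t)-x_0}{t}\big\|^2\,\pi(x_0,x_1)\,dx_0dx_1\,dt$ with $x_t=(1-t)x_0+tx_1$, I would change variables along each interpolation line: write $z_0=(\phi^\Psi_t)^{-1}(x_t)$, split the integrand depending on whether the line through $(x_0,x_1)$ "agrees" with the straight $\Psi$-trajectory through $x_t$, and integrate first in $t$ for fixed $(x_0,x_1)$. The key computation is that $\int_0^1\|(\phi^\Psi_t)^{-1}(x_t)-x_0\|^2 t^{-2}\,dt$ telescopes into an expression depending only on the endpoints $x_0,x_1$ and on $\Psi$ through a potential-type quantity (essentially $\Psi(x_0)+\overline\Psi(x_1)-\langle x_0,x_1\rangle$ plus curve corrections), whose $\pi$-average, by the marginal constraints $\pi\mapsto p_0,p_1$, reduces to $\int\Psi\,dp_0+\int\overline\Psi\,dp_1$ up to a $\Psi$-independent constant — i.e. to $\mathcal{L}_{OT}(\Psi)$ plus a constant. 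Since this depends on $\pi$ only through its marginals, it is the same for all $\pi$, and in particular equals the value at $\pi^*$.

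Step three: assemble. By Proposition~\ref{prop:intractable dist}, $\textsc{dist}(u^\Psi,u^{\Psi^*}) = \mathcal{L}^{\pi^*}_{FM}(u^\Psi) - \mathcal{L}^{\pi^*}_{FM}(u^{\Psi^*})$. By Step two applied to both $u^\Psi$ and $u^{\Psi^*}$ (the latter being optimal with convex function $\Psi^*$), $\mathcal{L}^{\pi^*}_{FM}(u^\Psi) = \mathcal{L}^{\pi}_{FM}(u^\Psi)$ and $\mathcal{L}^{\pi^*}_{FM}(u^{\Psi^*}) = \mathcal{L}^{\pi}_{FM}(u^{\Psi^*})$ for every $\pi$. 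Finally $\mathcal{L}^\pi_{FM}(u^\Psi)=\mathcal{L}^\pi_{OFM}(\Psi)$ and $\mathcal{L}^\pi_{FM}(u^{\Psi^*})=\mathcal{L}^\pi_{OFM}(\Psi^*)$ by definition \eqref{eq:OFM loss}. Substituting gives \eqref{eq:fields diff bound}.

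The main obstacle is Step two — the plan-independence of $\mathcal{L}^\pi_{FM}(u^\Psi)$ for optimal fields. Unlike the classical marginal-preserving property of FM (where the field is \emph{fit} to the data and the interpolation marginals are automatically $p_t$), here $u^\Psi$ is prescribed in advance and need not have $\phi^\Psi_t\#p_0 = $ the interpolant law of $\pi$; so one genuinely has to carry out the line-by-line integration over $t$ and exploit that only the endpoint marginals survive the averaging. I would be careful that the $t=1$ fiber (where the inversion $(\phi^\Psi_t)^{-1}$ degenerates) contributes a null set, and that Fubini is applicable, i.e. the integrand is nonnegative so Tonelli suffices. Once plan-independence is in hand, everything else is bookkeeping with the earlier results.
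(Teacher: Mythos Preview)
Your overall plan mirrors the paper's proof: reduce to Proposition~\ref{prop:intractable dist}, then invoke the line-by-line time-integration identity (what the paper calls the Main Integration Lemma) to pass from $\pi^*$ to an arbitrary $\pi$. The assembly in Step~3 and the identification $\mathcal{L}_{FM}^\pi(u^\Psi)=\mathcal{L}_{OFM}^\pi(\Psi)$ by definition are correct.

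However, Step~2 as stated is false. You claim that for an optimal field $u^\Psi$ the quantity $\mathcal{L}^\pi_{FM}(u^\Psi)$ itself does not depend on $\pi\in\Pi(p_0,p_1)$. But your own computation shows why this fails: the time integral evaluates (exactly, not up to ``curve corrections'') to $2\big[\Psi(x_0)+\overline\Psi(x_1)-\langle x_0,x_1\rangle\big]$, and after averaging over $\pi$ the cross term $\EE_\pi\langle x_0,x_1\rangle$ is \emph{not} determined by the marginals $p_0,p_1$. As a concrete check, take $\Psi(x)=\tfrac12\|x\|^2$ so $u^\Psi\equiv 0$; then $\mathcal{L}^\pi_{FM}(u^\Psi)=\EE_\pi\|x_1-x_0\|^2$, which is (twice) the transport cost of $\pi$ and certainly varies with $\pi$. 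Thus the individual equalities $\mathcal{L}^{\pi^*}_{FM}(u^\Psi)=\mathcal{L}^{\pi}_{FM}(u^\Psi)$ and $\mathcal{L}^{\pi^*}_{FM}(u^{\Psi^*})=\mathcal{L}^{\pi}_{FM}(u^{\Psi^*})$ in Step~3 are both wrong.

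The fix is immediate and is exactly what the paper does: only the \emph{difference} is plan-independent. From the identity you obtain $\mathcal{L}^\pi_{OFM}(\Psi)=2\mathcal{L}_{OT}(\Psi)-2\,\EE_\pi\langle x_0,x_1\rangle$, and the same with $\Psi^*$; subtracting cancels the $\pi$-dependent constant and gives
\[
\mathcal{L}^\pi_{OFM}(\Psi)-\mathcal{L}^\pi_{OFM}(\Psi^*)=2\big[\mathcal{L}_{OT}(\Psi)-\mathcal{L}_{OT}(\Psi^*)\big],
\]
whose right-hand side is independent of $\pi$. Specializing to $\pi=\pi^*$ and combining with Proposition~\ref{prop:intractable dist} yields \eqref{eq:fields diff bound}. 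So your Step~2 should be rephrased as plan-independence of the \emph{difference}, after which your Step~3 goes through unchanged.
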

\vspace{-1mm}
In \eqref{eq:fields diff bound}, the first term is our tractable OFM loss, and the second term does not depend on $\Psi$. Hence, during the whole minimization process in our OFM, we gradually lower the distance \eqref{eq:dist} between the current vector field and the dynamic OT field up to the complete match.

\vspace{-3mm}

\section{Experimental Illustrations}\label{sec:main exp}
\vspace{-2mm}
In this section, we showcase the performance of our Optimal Flow Matching method on illustrative 2D scenario (\S \ref{exp:toy-2D-8gau}) and Wasserstein-2 benchmark \cite{korotin2021neural} (\S \ref{exp:w2-bench}). Finally, we apply our approach for solving high-dimensional unpaired image-to-image translation in the latent space of pretrained ALAE autoencoder (\S \ref{sec:exp-image}). The \texttt{PyTorch} implementation of our method is publicly available at
\begin{center}
    \url{https://github.com/Jhomanik/Optimal-Flow-Matching}
\end{center}
The \underline{technical details} of our experiments (architectures, hyperparameters) are in the Appendix \ref{sec:app exp}.


\vspace{-2mm}
\subsection{Illustrative 2D Example}\label{exp:toy-2D-8gau}
\vspace{-2mm}
In this subsection, we illustrate the proof-of-concept of our OFM on 2D setup and demonstrate that OFM's solutions do not depend on the initial transport plan $\pi$. We run our Algorithm \ref{alg:OFM} between a standard Gaussian $p_0 = \mathcal{N}(0, I)$ and a Mixture of eight Gaussians $p_1$ depicted in the Figure \ref{fig:8gau-srctgt}. We consider different stochastic plans $\pi$: independent plan $p_0 \times p_1$ (Figure \ref{fig:8gau-idp}), minibatch and \textit{anti}minibatch (Figures \ref{fig:8gau-mb64}, \ref{fig:8gau-mb64-anti}) discrete OT (quadratic cost) with batch size $B_{\text{mb}} = 64$. In the \textbf{anti}minibatch case, we compose the pairs of source and target points by solving discrete OT with \textbf{minus} quadratic cost $- \Vert x - y \Vert_2^2$. The fitted OFM maps and trajectories are presented in Figure \ref{fig:8gau}. 
We empirically see that our OFM finds the
\textit{same solution for all initial plans}~$\pi$.

For completeness, in Appendix \ref{sec: 2d FMe}, we apply these plans to the \underline{original FM} \eqref{eq:rectflow min}, and show that, in comparison with our OFM, the resulting paths obtained by FM considerably depend on the plan.



\begin{figure}[!h]
\begin{subfigure}[b]{1\linewidth}
\hspace{40.5mm}
\includegraphics[width=0.64\linewidth]{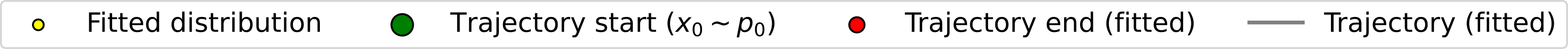}
\label{fig:8gau-legend}
\end{subfigure}

\hspace{0mm}\begin{subfigure}{0.247\linewidth}
\centering
\includegraphics[width=\textwidth]{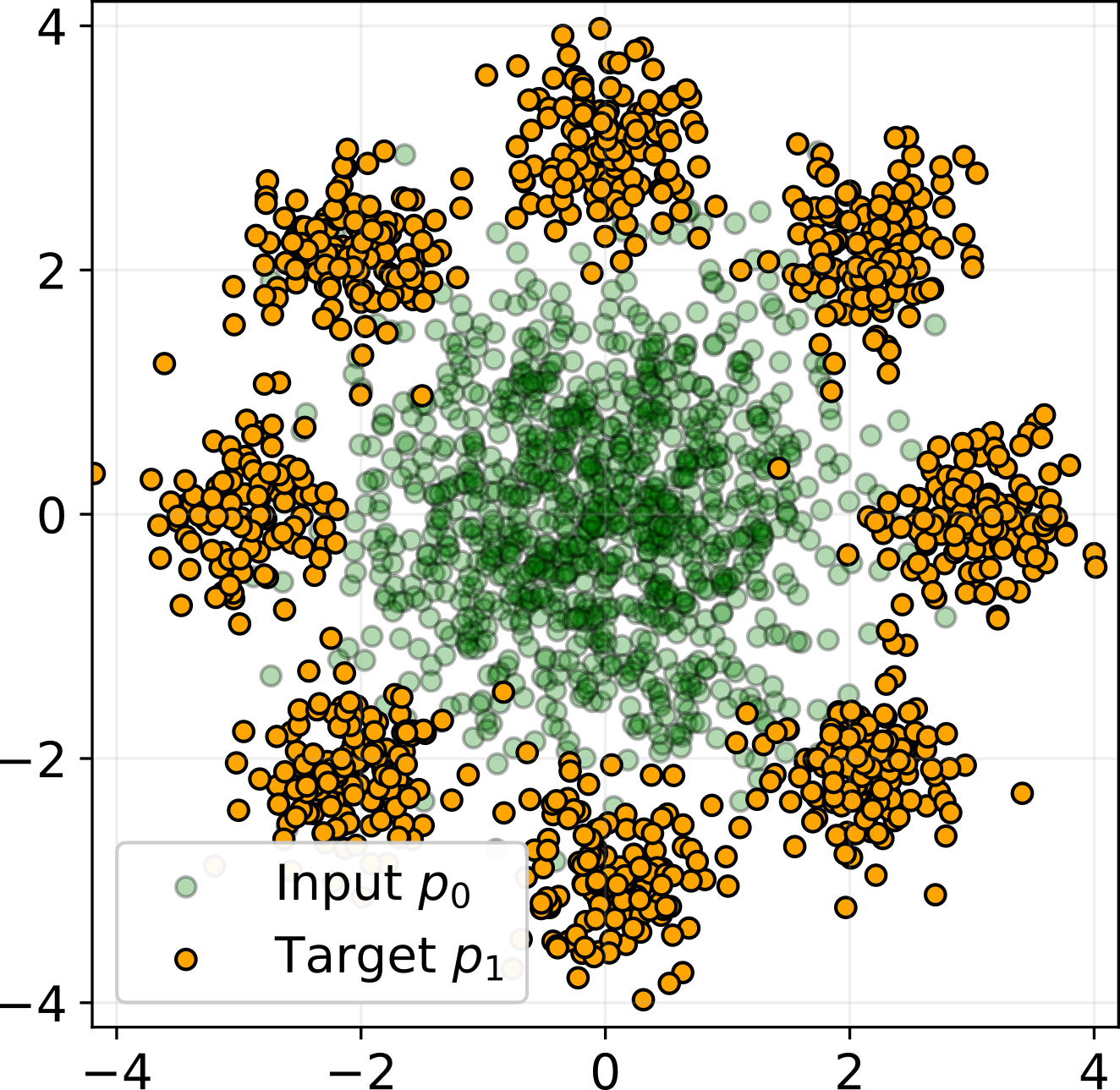}
\vspace{-5mm}\caption{\small \centering Input and target\newline distributions $p_0$ and $p_1$.}
\label{fig:8gau-srctgt}
\end{subfigure}\hspace*{1mm}
\begin{subfigure}{0.23\textwidth}
\centering
\includegraphics[width=\textwidth]{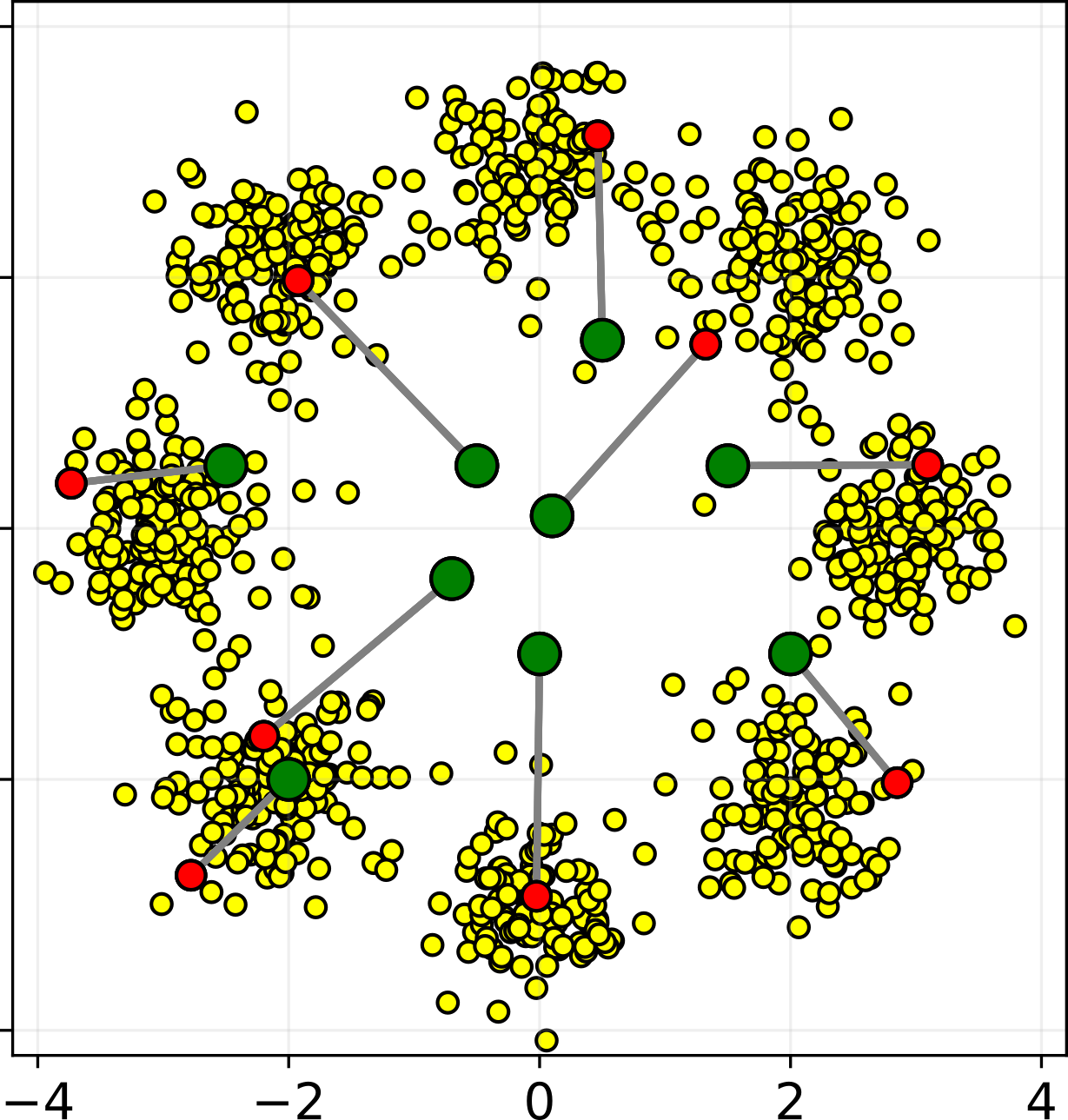}
\vspace{-5mm}\caption{\centering \textbf{Our} fitted OFM;\newline independent $\pi \!=\! p_0 \!\times\! p_1$.}
\label{fig:8gau-idp}
\end{subfigure}\hspace{1mm}
\begin{subfigure}{0.23\textwidth}
\centering
\includegraphics[width=\textwidth]{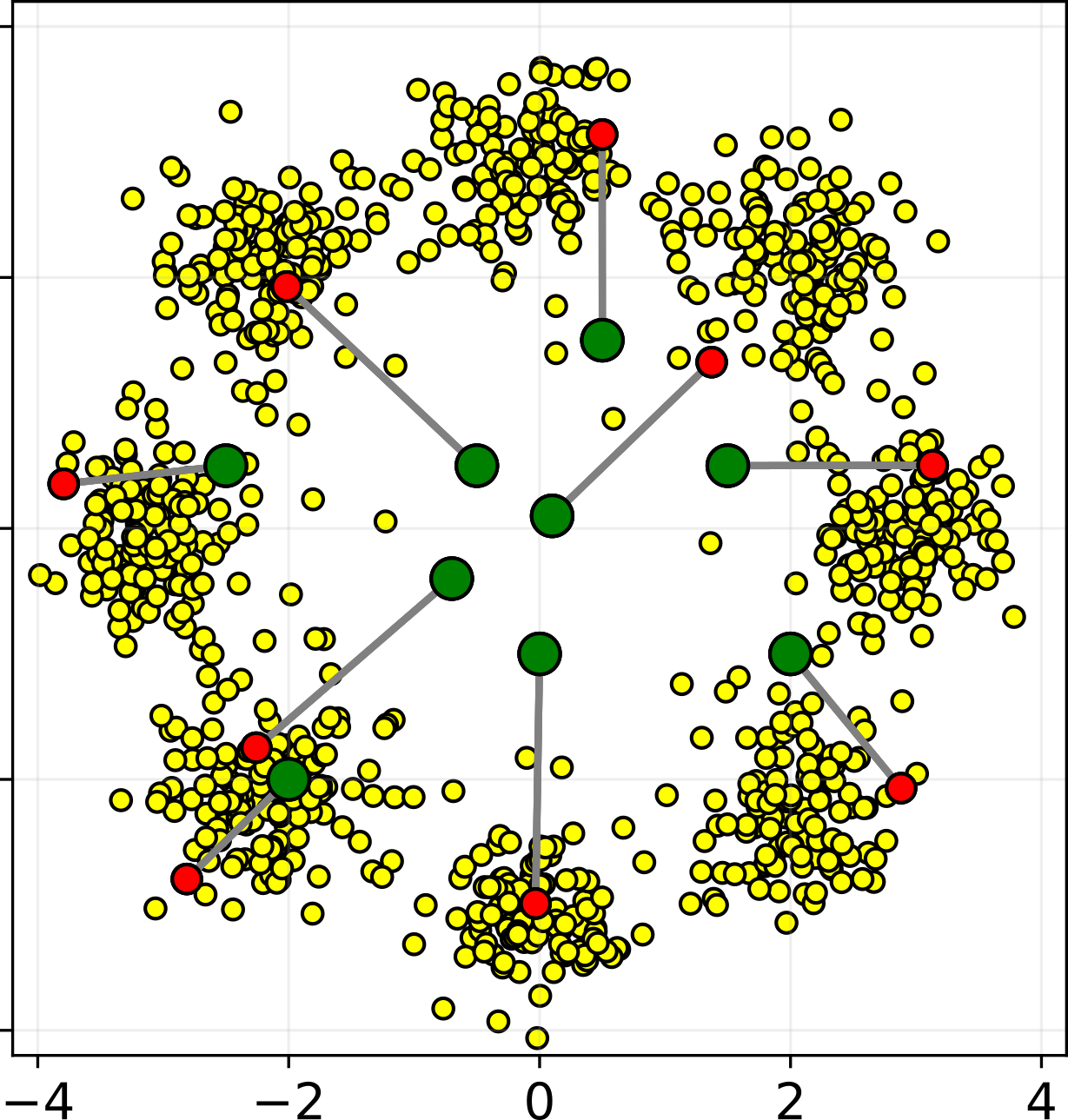}
\vspace{-5mm}\caption{\small \centering \textbf{Our} fitted OFM; minibatch $\pi$.}
\label{fig:8gau-mb64}
\end{subfigure}\hspace{1mm}
\begin{subfigure}{0.23\textwidth}
\centering
\includegraphics[width=\textwidth]{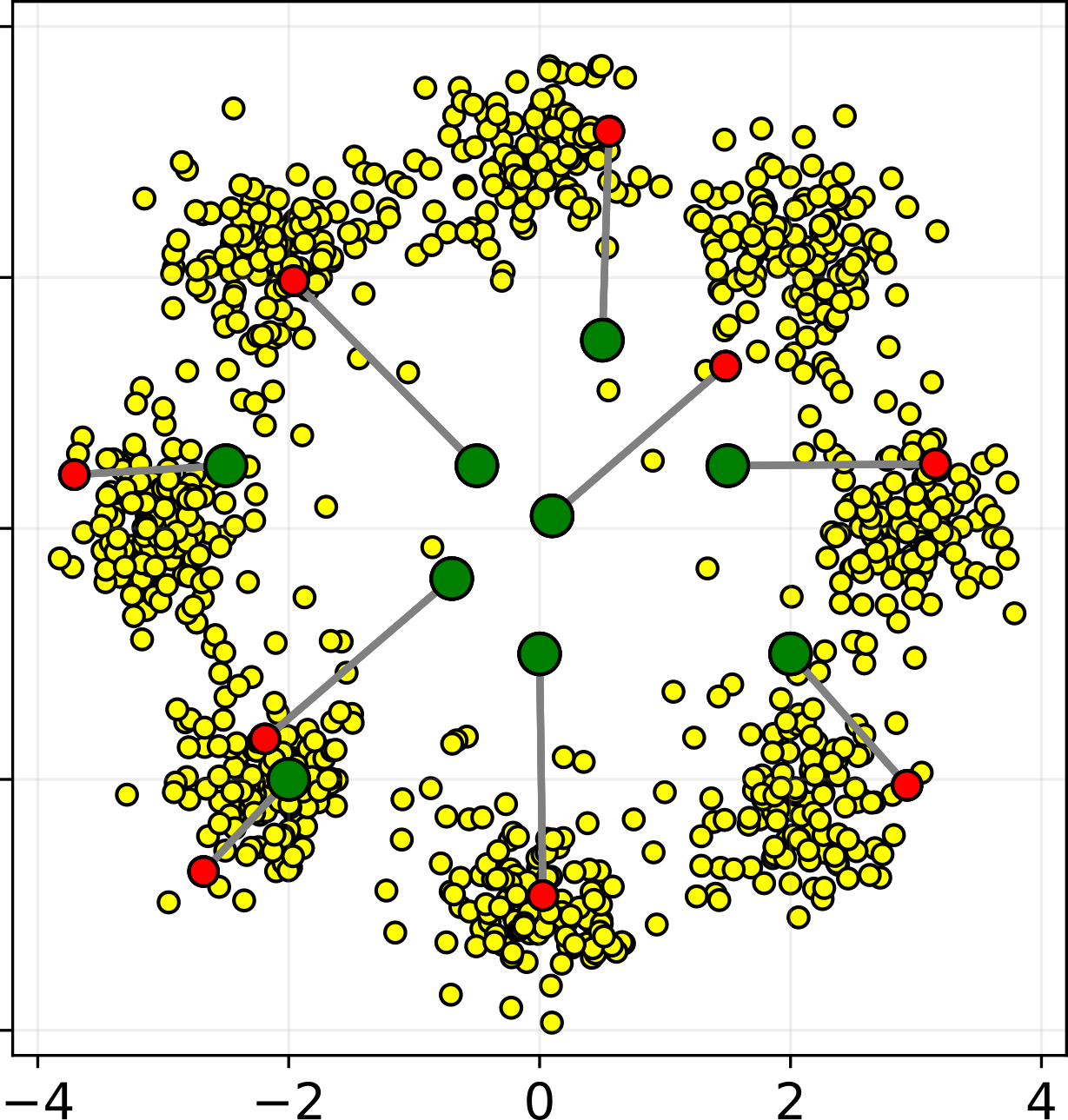}
\vspace{-5mm}\caption{\centering \small \textbf{Our} fitted OFM; \textbf{anti}minibatch $\pi$.}
\label{fig:8gau-mb64-anti}
\end{subfigure}
\vspace{-1mm}
\caption{Performance of \textbf{our} Optimal Flow Matching on \textit{Gaussian}$\rightarrow$\textit{Eight Gaussians} 2D setup.}
\label{fig:8gau}
\vspace{-2mm}
\end{figure}

\vspace{-2mm}
\subsection{High-dimensional OT Benchmarks}\label{exp:w2-bench}
\vspace{-2mm}

In this subsection, we quantitatively compare our OFM and other methods testing their ability to solve OT. We run our OFM, FM based methods and OT solvers on OT Benchmark \cite{korotin2021neural}. The authors provide high-dimensional continuous distributions  $p_0, p_1$ for which the ground truth OT map $T^*$ for the quadratic cost is known by the construction. To assess the quality of retrieved transport maps, we use standard \textit{unexplained variance percentage} $\mathcal{L}^2$-UVP$(T) := 100 \cdot \| T - T^*\|^2_{\mathcal{L}^2(p_0)} / \text{Var}(p_1) \%$ \cite{korotin2021neural}. It directly computes the normalized squared error between OT map $T^{*}$ and learned map $T$.

\textbf{Competitors.} We evaluate Conditional Flow Matching (OT-CFM), Rectified Flow (RF), $c$-Rectified Flow ($c$-RF), the most relevant OT solver MMv-1 \cite{taghvaei20192} and its amortized version from \cite{amos2023on}.  In \cite{taghvaei20192} and \cite{amos2023on}, the authors directly minimize the dual formulation loss $\mathcal{L}_{OT}$ \eqref{eq:dual formulation} by parametrizing $\Psi$ with ICNNs and calculating $\overline{\Psi}(x_1)$ via solving a convex optimization subproblem. The latter is similar to our inversion \eqref{eq:min subtask}. Additionally, in \cite{amos2023on}, the authors use MLPs to parametrize $\Psi$, and we include these results as well. Following \cite{korotin2021neural}, we also provide results for a linear OT map (baseline) which translates means and variances of distributions to each other. 
For our OFM, we consider two initial plans: independent plan (Ind) and minibatch OT (MB), the batch size for the latter is $B_{\text{mb}} = 64$. 

The overall results are presented in Table \ref{tab:bench results}.
More  \underline{details} are given in Appendix \ref{sec:bench details}.

\begin{table*}[h]
\centering
\tiny
\begin{tabular}{rrcccccccc}
\toprule
\cmidrule(lr){3-10} 
Solver & Solver type & {$D\!=\!2$} & {$D\!=\!4$} & {$D\!=\!8$} & {$D\!=\!16$} & {$D\!=\!32$} & {$D\!=\!64$} & {$D\!=\!128$} & {$D\!=\!256$} \\
\midrule  
MMv$1^*$\cite{taghvaei20192} &  &  $0.2$  &  $1.0$  &  $1.8$  &  $1.4$  &  $6.9$ &  $8.1$ &   $2.2$  &  $2.6$   \\

$\text{Amortization, ICNN}^{**}$  \cite{amos2023on} & Dual OT solver &  $ 0.26$   & $0.78$     & $1.6$     &  $1.1$    & $1.9$  & $4.2$  & $1.6$    &  $2.0$   \\

$\text{Amortization, MLP}^{**}$ \cite{amos2023on}  &  &  $0.03$ & $0.22$ & $0.6$ & $0.8$ & $2.0$  & $2.1$ & $0.67$ & $0.59$    \\

\hline
$\text{Linear}^*$ \cite{korotin2021neural} & Baseline & $14.1$ & $14.9$ & $27.3$ & $41.6$ & $55.3$ & $63.9$ & $63.6$ & $67.4$ \\ 
\hline

OT-CFM \cite{tong2024improving}  & \multirow{5}{*}{Flow Matching} & $0.16$ & $0.73$ & $2.27$ & $4.33$ & $7.9$ & $11.4$ & $12.1$ & $27.5$ \\ 

RF \cite{liu2023flow} &  & $8.58$ & $49.46$ & $51.25$ & $63.33$ & $63.52$ & $85.13$ & $84.49$ & $83.13$ \\ 

$c$-RF \cite{liu2022rectified} &  & $1.56$ & $13.11$ & $17.87$ & $35.39$ & $48.46$ & $66.52$ & $68.08$ & $76.48$ \\




OFM Ind \textbf{(Ours)} & & $0.19$ & $0.61$ & $1.4$ & $1.1$ & $1.47$ & $8.35$ & $1.96$ & $3.96$ \\ 

OFM MB \textbf{(Ours)} &  & $\mathbf{0.15}$   & $\mathbf{0.52}$   & $\mathbf{1.2}$   & $\mathbf{1.0}$   & $\mathbf{1.2}$ & $\mathbf{7.2}$  & $\mathbf{1.5}$    &  $\mathbf{2.9}$      \\ 
\hline

\end{tabular}
\vspace{-2mm}
\captionsetup{justification=centering, font=scriptsize}
\caption{$\mathcal{L}^2-$UVP values of solvers fitted on high-dimensional benchmarks in dimensions $D = 2,4,8, 16, 32, 64, 128, 256$.  \\
The best metric over \textit{Flow Matching based } methods is \textbf{bolded}.  * Metrics are taken from \cite{korotin2021neural}. ** Metrics are taken from \cite{amos2023on}. }
\label{tab:bench results}
\vspace{-2mm}
\end{table*}







\begin{wrapfigure}{r}{0.5\textwidth}
\vspace{0mm}
\hspace{0mm}\begin{subfigure}[b]{0.99\linewidth}
\centering
\includegraphics[width=1.0\linewidth]{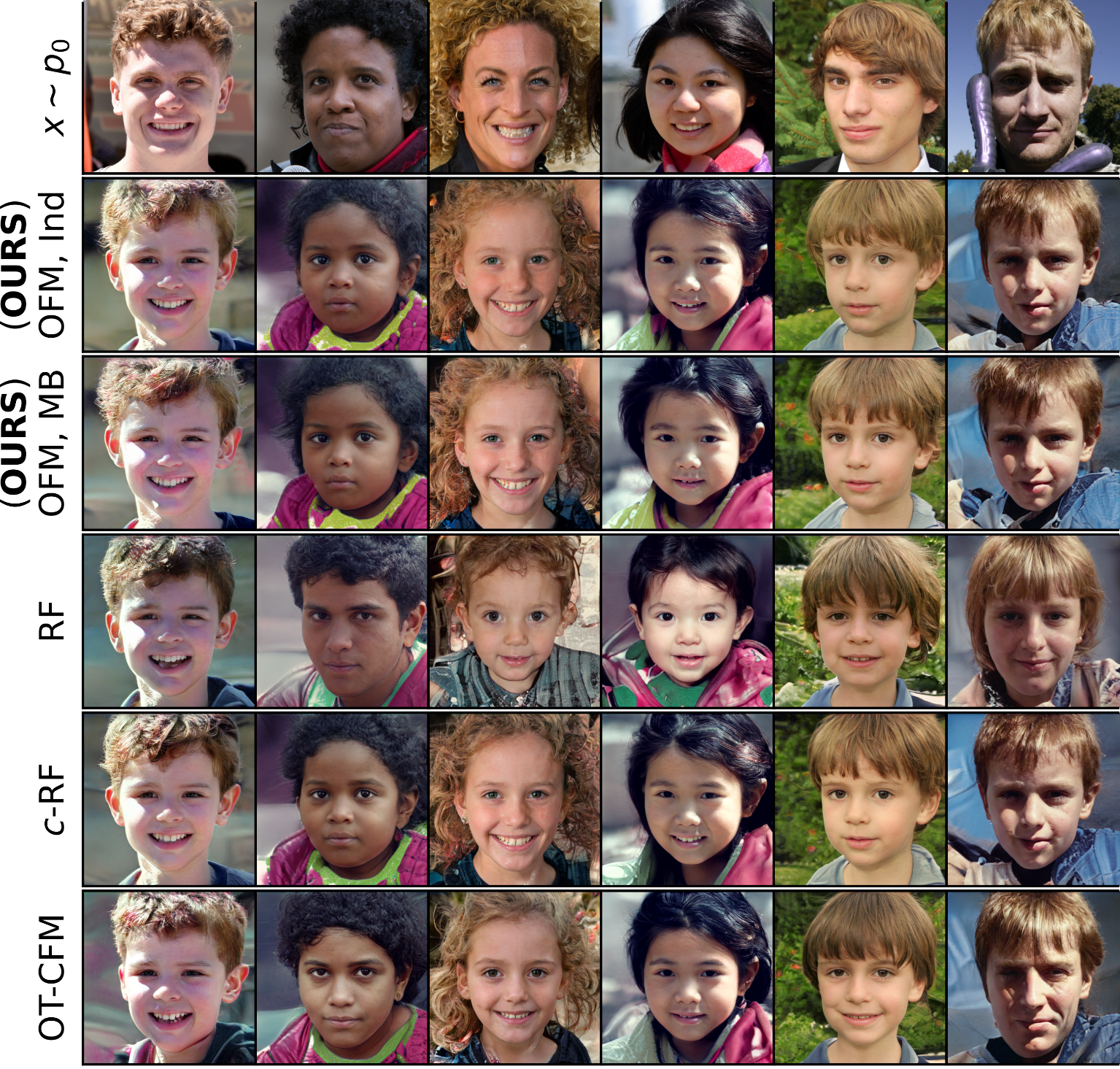}
\vspace{0.51mm}
\end{subfigure}
\vspace{-5mm}\caption{\centering \small Unpaired I2I \textit{Adult}$\rightarrow$\textit{Child} by FM solvers, ALAE $1024\!\times\! 1024$ FFHQ latent space.}
\label{fig:alae_ad2ch}
\vspace{-10mm}
\end{wrapfigure}

\textbf{Results.} Among FM-based methods, our OFM with any plan demonstrates the best results. For all plans, OFM convergences to close final solutions and metrics. Minibatch plan provides a little bit better results, especially in high dimensions. In theory, the OFM results for any plan $\pi$ must be similar. However, in stochastic optimization, plans with large variance yield convergence to slightly worse solutions.

MLP-based OT solver usually beats our OFM, since MLPs do not have ICNNs' limitations in practice. However, usage of MLP is an empirical trick and is not completely justified. We also run OFM with MLP instead ICNN, and, unfortunately, the method fails to converge.

RF demonstrates worse performance than even linear baseline, but it is ok since it is not designed to solve $\mathbb{W}_{2}$ OT.
In turn, $c$-RF works better, but rapidly deteriorates with increasing dimensions. OT-CFM demonstrates the best results among baseline FM-based methods, but still underperforms compared to our OFM solver in high dimensions.

\vspace{-2mm}
\subsection{Unpaired Image-to-image Transfer} \label{sec:exp-image}
\vspace{-2mm}

Another task that involves learning a translation between two distributions is unpaired image-to-image translation \citep{zhu2017unpaired}. We follow the setup of \cite{korotin2023light} where translation is computed in the $512$ dimensional latent space of the pre-trained ALAE autoencoder \cite{pidhorskyi2020adversarial} on $1024 \times 1024$ FFHQ dataset \citep{karras2019style}. In particular, we split the train FFHQ sample (60K faces) into $children$ and $adults$ subsets and consider the corresponding ALAE latent codes as the source and target distributions $p_0$ and $p_1$. At the inference stage, we take a new (unseen) $adult$ face from a test FFHQ sample, extract its latent code, process with our learned model and then decode back to the image space. The qualitative translation results and FID metric \cite{heusel2017gans} are presented in Figure~\ref{fig:alae_ad2ch} and Table~\ref{tab:FID}, respectively.
\begin{table}[!ht]
\centering
\begin{tabular}{|c|c|c|c|c|c|}
\hline
Method             & \begin{tabular}[c]{@{}l@{}}OFM, Ind\\ \textbf{(Ours)}\end{tabular} & \begin{tabular}[c]{@{}l@{}}OFM, MB\\   \textbf{(Ours)}\end{tabular} & RF     & $c-$RF & OT-CFM \\ \hline FID & $11.8$  & $\textbf{11.0}$ & $21.0$ & $13.5$ & $12.9$  \\ \hline
\end{tabular}
\vspace{2mm}
\caption{FID metric on Adult$\to$Child translation task for the Flow Matching based methods.}
\label{tab:FID}
\end{table}
\vspace{-5mm}

The batch size for minibatch OT methods ($\lfloor$OFM, MB$\rceil$, $\lfloor$OT-CFM$\rceil$) is $B_{\text{mb}} = 128$. 
Our OFM converges to nearly the same solution for both independent and MB plans, and demonstrates qualitatively plausible translations. The most similar results to our method are demonstrated by $\lfloor$$c$-RF$\rceil$. Similar to OFM, this method (in the limit of RF steps) also recovers the quadratic OT mapping.
\vspace{-2mm}
\section{Discussion}
\vspace{-2mm}
\textbf{Potential impact.} We believe that our novel theoretical results have a huge potential for improving modern flow matching-based methods and inspiring the community for further studies. We think this is of high importance especially taking into account that modern generative models start to extensively use flow matching methods \cite{yan2024perflow,liu2024instaflow,esser2024scaling}.


\textbf{Limitations and broader impact} are discussed in Appendix \ref{sec-limitations}. 

\section{Acknowledgement}
The work of N. Kornilov has been financially supported by The Analytical Center for the Government of the Russian Federation (Agreement No. 70-2021-00143 01.11.2021, IGK 000000D730324P540002).



\bibliographystyle{abbrvnat}
\bibliography{refs}
\newpage
\appendix

\section{Proofs and auxiliary statements}\label{sec:proofs} 
 \setcounter{proposition}{0}
  \setcounter{lemma}{0}
   \setcounter{theorem}{0}

In this section, we place the proofs of all our results from the main manuscript and some auxiliary results. We reorder the proofs of Prop. \ref{rmk:loss in practice} and Prop. \ref{prop:ofm_simplified_loss} since the former is based on the latter.

Note that in all of our theoretical derivations, if not stated explicitly, we assume the differentiability of convex potential $\Psi$ at given points $z_0, x_t$ etc. This assumption is done for simplicity and does not spoil our theory. The convex functions are known to be differentiable almost surely w.r.t Lebesgue measure \cite{rockafellar2015convex}. Therefore, since we consider absolutely continuous reference distributions $p_0$, $p_1$ (see \S \ref{sec:static OT}), the differentiability of $\Psi$ at the considered points also holds almost surely.

\textbf{Proof of Proposition \ref{prop:ofm_simplified_loss}} (Simplified OFM Loss)
\begin{proof}
By definition $\mathcal{L}^\pi_{OFM}(\Psi)$ equals to 
\begin{equation}
\mathcal{L}_{OFM}^\pi(\Psi) \!\!:=\!\! \int\limits_0^1\!\!\left\{\int\limits_{\R^D \times \R^D}  \!\!  \|   u^{\Psi}_t(x_t) - (x_1 - x_0)\|^2 \pi(x_0, x_1) dx_0 dx_1\!\!\right\}\!\!dt,  x_t = (1-t)  x_0 + t x_1. \label{eq:OFM_loss_prop1_proof}
\end{equation}

For fixed points $x_0, x_1$ and time $t$ in integrand, we find a point $z_0 = (\phi^\Psi_t)^{-1} (x_t)$  such that in moment $t \in [0,1] $ it is transported to point $x_t= (1-t) x_0 + t x_1$. This point $z_0$ satisfies equality 
\begin{eqnarray*}
   x_t  =  t \nabla\Psi(z_0)  + (1-t) z_0. 
\end{eqnarray*}
We define the vector field $u^{\Psi}_t$ as
\begin{eqnarray*}
   u^{\Psi}_t(x_t) &=& \nabla \Psi(z_0) - z_0 = \frac{x_t - z_0}{t}.
\end{eqnarray*}
Putting  $u^{\Psi}_t(x_t)$ in the integrand of \eqref{eq:OFM_loss_prop1_proof},  we obtain simplified integrand
\begin{eqnarray*}
    \| x_1 - x_0 - u^{\Psi}_t(x_t)\|^2 &=& \left|\left| x_1 - x_0 - \left(\frac{x_t - z_0}{t}\right)\right|\right|^2 \\
    &=& \frac{1}{t^2} \|t x_1  - t x_0  - ((1-t) x_0 + t x_1) + z_0\|^2 \\
    &=& \frac{1}{t^2} \|z_0 - x_0\|^2 = \left|\left|\frac{(\phi^\Psi_t)^{-1} (x_t) - x_0 }{t}\right|\right|^2.
\end{eqnarray*}
\end{proof}

\textbf{Proof of Proposition \ref{rmk:loss in practice}} (Explicit Loss Gradient Formula)

\begin{proof}
Point $z_0 = (\phi^{\Psi_\theta}_{t} )^{-1}(x_t)$ now depends on parameters $\theta$. We differentiate the integrand from the simplified OFM loss \eqref{eq:OFM loss simple} for fixed points $x_0, x_1$ and time $t$, i.e.,
\begin{eqnarray}
        d \left( \frac{1}{t^2} \|z_0 - x_0\|^2 \right) &=& 2\left\la \frac{z_0 - x_0}{t^2}, \frac{d z_0}{d\theta} d\theta \right\ra. \label{eq:loss_without_E}
\end{eqnarray}
For point $z_0$, the equation \eqref{eq:y} holds true:
\begin{equation}\label{eq:y_here}
    x_t =  (1-t) z_0 + t \nabla \Psi_\theta (z_0).
\end{equation}
We differentiate \eqref{eq:y_here} w.r.t. $\theta$ and obtain
\begin{eqnarray*}
        0 &=&   (1-t) \frac{d z_0}{d \theta} + t\nabla^2 \Psi_\theta(z_0)\frac{d z_0}{d \theta} + t\frac{\partial \nabla \Psi_\theta}{\partial \theta}  (z_0) \Rightarrow \\
        \frac{d z_0}{d \theta}  &=& - \left(   t \nabla^2 \Psi_\theta (z_0) 
 +  (1-t) I\right)^{-1} \cdot t \frac{\partial \nabla \Psi_\theta}{\partial \theta} (z_0).
\end{eqnarray*}
Therefore, we have
\begin{eqnarray}
        \eqref{eq:loss_without_E} &=&  \left\la 2\frac{x_0 - z_0}{t}, \left(  t \nabla^2\Psi_\theta(z_0)  
 +  (1-t) I\right)^{-1} \frac{\partial \nabla \Psi_\theta}{\partial \theta} (z_0) d\theta \right\ra \notag \\
 &=& \left\la  2\left(  t \nabla^2\Psi_\theta(z_0)  
 +  (1-t)I\right)^{-1} \frac{(x_0 - z_0)}{t},  \frac{\partial \nabla \Psi_\theta}{\partial \theta} (z_0) d\theta \right\ra. \label{eq:loss practice}
    \end{eqnarray}
Now the differentiation over $\theta$ is located only in the right part of \eqref{eq:loss practice} in the term $\frac{\partial \nabla \Psi_\theta}{\partial \theta}$. Hence, point $z_0$ and the left part of \eqref{eq:loss practice} can be considered as constants during differentiation. To get the gradient of OFM loss we also need to take math expectation over plan $\pi$ and time $t$.
\end{proof}

The following two Lemmas are used to prove our main theoretical result, Theorem \ref{thm:loss equiv}. 

\begin{lemma}[Properties of convex functions and their conjugates]\label{lem:conv_conj}
    Let $\Psi : \R^D \rightarrow \R$ be a convex function; $x_0, x_1 \in \R^D$. Let $\Psi$ and $\overline{\Psi}$ be differentiable at $x_0$ and $x_1$ correspondingly. Then the following statements are equivalent:
     \begin{enumerate}[label = (\roman*)]
        \item \label{it:conv_grad1} $x_1 = \nabla \Psi(x_0)$;
        \item \label{it:conv_amax0} $x_0 = \argmax\limits_{z \in \R^D} \big\{ \langle x_1, z \rangle - \Psi(z) \}$;
        \item \label{it:conv_FY} {\it Fenchel-Young's equality}: $\Psi(x_0) + \overline{\Psi}(x_1) = \langle x_1, x_0 \rangle$;
        \item \label{it:conv_grad0} $x_0 = \nabla \overline{\Psi}(x_1)$ ;
        \item \label{it:conv_amax1} $x_1 = \argmax\limits_{z \in \R^D} \big\{ \langle z, x_0 \rangle - \overline{\Psi}(z) \}$ ;
    \end{enumerate}
\end{lemma}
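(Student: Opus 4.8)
The plan is to establish the chain of equivalences \ref{it:conv_grad1} $\Leftrightarrow$ \ref{it:conv_amax0} $\Leftrightarrow$ \ref{it:conv_FY}, and then to obtain \ref{it:conv_grad0} and \ref{it:conv_amax1} by a symmetry argument using the biconjugate identity $\overline{\overline{\Psi}} = \Psi$ (valid since $\Psi$ is convex, lower semicontinuous, and finite-valued on $\R^D$). This reduces the whole lemma to proving just three implications carefully, plus invoking one standard fact.

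First I would show \ref{it:conv_amax0} $\Leftrightarrow$ \ref{it:conv_FY}: by the definition of the conjugate, $\overline{\Psi}(x_1) = \sup_{z}\{\langle x_1, z\rangle - \Psi(z)\} \geq \langle x_1, x_0\rangle - \Psi(x_0)$ always, so the supremum is attained at $x_0$ precisely when equality holds, i.e. when $\Psi(x_0) + \overline{\Psi}(x_1) = \langle x_1, x_0\rangle$. Next, \ref{it:conv_amax0} $\Leftrightarrow$ \ref{it:conv_grad1}: the function $z \mapsto \langle x_1, z\rangle - \Psi(z)$ is concave, so $x_0$ is its maximizer iff $0$ is a supergradient at $x_0$, i.e. $x_1 \in \partial \Psi(x_0)$; since $\Psi$ is assumed differentiable at $x_0$, the subdifferential is the singleton $\{\nabla\Psi(x_0)\}$, giving $x_1 = \nabla\Psi(x_0)$. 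This settles the equivalence of \ref{it:conv_grad1}, \ref{it:conv_amax0}, \ref{it:conv_FY}.

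Then, to get \ref{it:conv_grad0} and \ref{it:conv_amax1}, I would apply the already-proved equivalences to the convex function $\overline{\Psi}$ in place of $\Psi$, with the roles of $x_0$ and $x_1$ swapped. Using $\overline{\overline{\Psi}} = \Psi$: statement \ref{it:conv_FY} is manifestly symmetric in $\Psi \leftrightarrow \overline{\Psi}$ and $x_0 \leftrightarrow x_1$ (it reads $\overline{\Psi}(x_1) + \Psi(x_0) = \langle x_0, x_1\rangle$), so the equivalences applied to $\overline{\Psi}$ yield: \ref{it:conv_FY} $\Leftrightarrow$ ``$x_0 = \nabla\overline{\Psi}(x_1)$'' (this is \ref{it:conv_grad0}) $\Leftrightarrow$ ``$x_1 = \argmax_z\{\langle z, x_0\rangle - \overline{\Psi}(z)\}$'' (this is \ref{it:conv_amax1}), where I use differentiability of $\overline{\Psi}$ at $x_1$. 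Chaining everything through \ref{it:conv_FY} closes the cycle.

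The only mild obstacle is bookkeeping around the differentiability hypotheses: the equivalence \ref{it:conv_amax0} $\Leftrightarrow$ \ref{it:conv_grad1} genuinely needs $\Psi$ differentiable at $x_0$ (otherwise one only gets $x_1 \in \partial\Psi(x_0)$), and symmetrically \ref{it:conv_amax1} $\Leftrightarrow$ \ref{it:conv_grad0} needs $\overline{\Psi}$ differentiable at $x_1$ — both are assumed in the statement, so no real difficulty arises. I should also note that finiteness of $\Psi$ on all of $\R^D$ (which holds here, as $\Psi : \R^D \to \R$) guarantees $\overline{\Psi}$ is proper, lsc, and convex so that the Fenchel–Moreau biconjugate theorem applies; everything else is a direct unwinding of definitions, with no heavy computation.
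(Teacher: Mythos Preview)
Your proposal is correct and, in fact, more explicit than the paper's own treatment: the paper does not prove this lemma at all but simply cites it as a simplified version of a textbook theorem (Polovinkin--Balashov) and refers to Boyd--Vandenberghe for the ingredients. Your chain \ref{it:conv_grad1} $\Leftrightarrow$ \ref{it:conv_amax0} $\Leftrightarrow$ \ref{it:conv_FY} via the definition of the conjugate plus the subdifferential-to-gradient reduction, followed by the biconjugate symmetry $\overline{\overline{\Psi}} = \Psi$ to pick up \ref{it:conv_grad0} and \ref{it:conv_amax1}, is exactly the standard elementary route one would reconstruct from those references, and your bookkeeping of the differentiability hypotheses is accurate.
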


\begin{proof}
    The lemma is a simplified version of \cite[Theorem 1.16.4]{polovinkin2007elements}. Also, the proof can be constructed by combining facts from \cite[\S 3.3]{boyd2004convex}.
\end{proof}

\begin{lemma}[Main Integration Lemma]\label{lem:main lemma}
For any two points $x_0, x_1 \in \R^D$ and a convex function $\Psi$, the following equality holds true: 
\begin{equation}\label{eq:main_eq_lem_app}
        \int_{0}^1 \| u^{\Psi}_t (x_t) - (x_1 - x_0)  \|^2 dt = 2  \cdot [\Psi(x_0) + \overline{\Psi}(x_1) - \la x_0, x_1 \ra],
\end{equation}
where $x_t = t x_0 + (1- t) x_1$.
\end{lemma}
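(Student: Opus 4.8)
The statement is an identity relating a time-integral of the squared deviation of the optimal vector field from the constant direction $x_1-x_0$ to the Fenchel--Young gap of $\Psi$ at $(x_0,x_1)$. My plan is to reduce the left-hand side to a one-dimensional integral over $t$ by using the explicit description of $u^\Psi_t$ in terms of the flow-map inverse. Fix $x_0,x_1\in\R^D$ and set $x_t = t x_0 + (1-t)x_1$ (note the paper's ordering here makes $x_0$ the endpoint at $t=1$; I will keep track of this carefully). For each $t\in(0,1)$, let $z_0 = z_0(t) = (\phi^\Psi_t)^{-1}(x_t)$ be the unique solution of the strongly convex inversion problem \eqref{eq:min subtask}, so that $x_t = (1-t)z_0 + t\nabla\Psi(z_0)$ and, as computed in the proof of Proposition \ref{prop:ofm_simplified_loss}, $u^\Psi_t(x_t) = \nabla\Psi(z_0) - z_0 = (x_t - z_0)/t$. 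Substituting and simplifying exactly as in that proof gives $\|u^\Psi_t(x_t) - (x_1-x_0)\|^2 = \frac{1}{t^2}\|z_0(t) - x_0\|^2$ (up to the bookkeeping of which endpoint is which), so the whole left-hand side becomes $\int_0^1 \frac{1}{t^2}\|z_0(t) - x_0\|^2\,dt$.

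The key idea for evaluating this integral is to recognize the integrand as (a multiple of) the derivative of something telescoping along the straight segment. Concretely, I expect to introduce the scalar function $g(t) := \langle \nabla\Psi(z_0(t)), x_t\rangle$ or, better, to work directly with $h(t) := \Psi(z_0(t))$ and differentiate in $t$. By the chain rule and the stationarity condition defining $z_0(t)$, many terms cancel: $\frac{d}{dt}\Psi(z_0(t)) = \langle \nabla\Psi(z_0(t)), \dot z_0(t)\rangle$, and differentiating the defining relation $x_t = (1-t)z_0(t) + t\nabla\Psi(z_0(t))$ in $t$ lets me solve for the relevant combination. The plan is to assemble these into an exact antiderivative of $\frac{1}{t^2}\|z_0(t)-x_0\|^2$ whose evaluation at the endpoints $t=0$ and $t=1$ produces precisely $2[\Psi(x_0) + \overline{\Psi}(x_1) - \langle x_0, x_1\rangle]$. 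At $t=1$, the inversion relation forces $\nabla\Psi(z_0(1)) = x_0$, i.e.\ $z_0(1) = \nabla\overline{\Psi}(x_0)$ (using Lemma \ref{lem:conv_conj}, items \ref{it:conv_grad1}--\ref{it:conv_grad0}); at $t\to 0$, $z_0(t)\to x_1$, and the $1/t^2$ singularity is tamed because $\|z_0(t) - x_0\|$ stays bounded away from forcing divergence only if one is careful — this is where I expect the real work. An alternative, possibly cleaner route avoids antiderivatives entirely: expand $\|u^\Psi_t(x_t) - (x_1-x_0)\|^2 = \|u^\Psi_t(x_t)\|^2 - 2\langle u^\Psi_t(x_t), x_1-x_0\rangle + \|x_1-x_0\|^2$, and use that along a straight trajectory $\|u^\Psi_t(x_t)\|$ is constant in $t$ (it equals $\|\nabla\Psi(z_0)-z_0\|$ evaluated at the single trajectory through $x_t$), so the integral splits into manageable pieces.

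The step I expect to be the main obstacle is the careful handling of the endpoint behavior as $t\to 0^+$ and the rigorous justification that $t\mapsto z_0(t)$ is differentiable (or at least absolutely continuous) so that the fundamental theorem of calculus applies. Differentiability of $z_0(t)$ follows from the implicit function theorem applied to $F(z,t) = (1-t)z + t\nabla\Psi(z) - x_t$, whose $z$-Jacobian $(1-t)I + t\nabla^2\Psi(z)$ is positive definite for $t<1$ under the paper's blanket smoothness assumption on $\Psi$ — but one must then argue the boundary contribution at $t=1$ is benign (the paper already notes $t=1$ has measure zero, and here the integrand extends continuously). Once differentiability and the endpoint limits are in hand, the identity \eqref{eq:main_eq_lem_app} should fall out by a direct computation of the antiderivative combined with Lemma \ref{lem:conv_conj} to rewrite the boundary terms in the symmetric Fenchel--Young form $\Psi(x_0) + \overline\Psi(x_1) - \langle x_0,x_1\rangle$. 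I would present the telescoping computation as the core of the proof and relegate the differentiability/boundary justifications to a short lemma or remark.
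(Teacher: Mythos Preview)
There are two concrete errors. First, your alternative route fails: $\|u^\Psi_t(x_t)\|$ is \emph{not} constant in $t$. The velocity $u^\Psi_t(x_t) = \nabla\Psi(z_0(t)) - z_0(t)$ is constant along each trajectory of $u^\Psi$, but the linear segment $t\mapsto x_t$ between the \emph{given} $x_0,x_1$ is in general not such a trajectory (it is one only when $x_1 = \nabla\Psi(x_0)$, in which case both sides of \eqref{eq:main_eq_lem_app} vanish trivially). As $t$ varies, $x_t$ crosses \emph{different} $u^\Psi$-trajectories, so $z_0(t)$ and hence the velocity genuinely change. Second, your endpoint bookkeeping is inverted: the statement's $x_t = tx_0 + (1-t)x_1$ is a typo for $x_t = (1-t)x_0 + tx_1$ (compare Proposition~\ref{prop:ofm_simplified_loss} and the OFM loss \eqref{eq:OFM loss}). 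With the correct convention, $z_0(0) = x_0$, so $\|z_0(t)-x_0\|^2 = O(t^2)$ and the $t^{-2}$ factor is entirely harmless at $t=0$, while $z_0(1) = \nabla\overline{\Psi}(x_1)$, which is exactly what produces $\overline{\Psi}(x_1)$ on the right. Your version ($z_0(0)\to x_1$) would make the integral diverge, which should have flagged the orientation mistake rather than been filed under ``where the real work is.''

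The paper's computation also differs from your main plan of assembling an antiderivative in $t$ from $h(t) = \Psi(z_0(t))$. It first rewrites the inversion identity as $(1-t)(x_0 - z_0) = t(\nabla\Psi(z_0) - x_1)$ to convert the integrand to $\frac{1}{(1-t)^2}\|\nabla\Psi(z_0) - x_1\|^2$, then changes variable $s = t/(1-t)$ to obtain $\int_0^\infty \|\nabla\Psi(z_0(s)) - x_1\|^2\,ds$. The decisive step is to pass to a \emph{curve integral} along the path $s \mapsto z_0(s)$ in $\R^D$: differentiating $s(\nabla\Psi(z_0) - x_1) = x_0 - z_0$ eliminates $ds$ in favour of $dz_0$, and the resulting line integral splits into four exact differentials (of $\langle x_1, z_0\rangle$, $\Psi(z_0)$, $\langle z_0, \nabla\Psi(z_0)\rangle$, and $\langle x_0, \nabla\Psi(z_0)\rangle$), evaluated between $z_0 = x_0$ and $z_0 = \nabla\overline{\Psi}(x_1)$ via Lemma~\ref{lem:conv_conj}. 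Your $t$-differentiation plan could in principle reach the same endpoint, but it would drag the Hessian block $(1-t)I + t\nabla^2\Psi(z_0)$ through every step; the change of variable plus the curve-integral viewpoint is precisely what makes the cancellations transparent and is the missing idea in your outline.
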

\begin{proof} 
Following Proposition \ref{prop:ofm_simplified_loss}, we use the simplified loss form, i.e.,
\begin{equation}
    \| u^{\Psi}_t (x_t) - (x_1 - x_0)  \|^2 = \frac{1}{t^2} \|z_0 - x_0\|^2, \label{eq:alt-loss-form} 
\end{equation}
where $z_0 = z_0(t) = (\phi^\Psi_t)^{-1} (x_t)$ satisfies the equality: 
\begin{eqnarray}
   x_t  =  t \nabla\Psi(z_0)  + (1-t) z_0. \label{eq:y} 
\end{eqnarray}
Next, we substitute \eqref{eq:alt-loss-form} into rhs of \eqref{eq:main_eq_lem_app} integrate  w.r.t. time $t$ from $0$ excluding to $1$ excluding (This exclusion does not change the integral):
\begin{eqnarray}
    \int_{0}^1 \| u^{\Psi}_t (x_t) - (x_1 - x_0)  \|^2 dt = \int_{0}^1 \frac{1}{t^2} \|z_0 - x_0\|^2 dt .  \label{eq:main-lem-alt-loss}
\end{eqnarray}

To further simplify \eqref{eq:main-lem-alt-loss} we need some preliminary work. Following \eqref{eq:y} we note:

\begin{eqnarray}
    x_t  =  t \nabla\Psi(z_0)  +  (1-t) z_0 &=& (1-t) x_0 + t x_1 \, \Rightarrow \nonumber \\
    t (\nabla\Psi(z_0) - x_1)   &=& (1-t) (x_0 - z_0) \, \Rightarrow \nonumber \\
    (\nabla\Psi(z_0) - x_1)   &=& \left(\frac{1 - t}{t}\right) (x_0 - z_0) \, \Rightarrow \label{eq:eq-s-to-curve} \\
    \Vert \nabla\Psi(z_0) - x_1 \Vert^2   &=& \frac{(1 - t)^2}{t^2} \Vert z_0 - x_0\Vert^2. \label{eq:eq-to-simplify-int}
\end{eqnarray}

Changing in \eqref{eq:main-lem-alt-loss} time variable $t$ to $s = \frac{t}{1 - t}$ , $ds = \frac{dt}{(1-t)^2}$ and substitution of \eqref{eq:eq-to-simplify-int} yield:
\begin{eqnarray}
    \int \limits_{0}^1 \frac{1}{t^2} \|z_0(t) - x_0\|^2 dt = \int \limits_{0}^1 \frac{(1-t)^2}{t^2} \|z_0(t) - x_0\|^2 \frac{dt}{(1-t)^2} = \int\limits_{0}^{\infty} \Vert \nabla\Psi(z_0(s)) - x_1 \Vert^2 ds.\label{eq:int_s}
\end{eqnarray}

We notice that set of points $z_0(s(t)) = (\phi^\Psi_t)^{-1} (x_t), t \in (0,1)$ forms a curve in $\R^D$ with parameter $t$ (or $s(t)$). Now we are to process formula \eqref{eq:int_s} by switching from the integration w.r.t. parameter $s$ to the integration along this curve. To do it properly we need two things:
\begin{enumerate}[leftmargin=10mm]
    \item \underline{Limits of integration}. The limits of integration along the curve $z_0(t)$ are: 
    \begin{eqnarray}
         \begin{aligned}
         z_0(t)|_{t=0} &= x_0, \\ 
         z_0(t)|_{t=1} &= (\nabla \Psi)^{-1} (x_1) \overset{\text{Lemma \ref{lem:conv_conj}; \ref{it:conv_grad1}, \ref{it:conv_grad0}}}{=} \nabla \overline{\Psi} (x_1). 
         \end{aligned} \label{eq:limits}
    \end{eqnarray}
    \item \underline{Expression under integral sign w.r.t. differential $d z_0$}. Starting with \eqref{eq:eq-s-to-curve}, we derive:
    \begin{eqnarray}
    \text{\eqref{eq:eq-s-to-curve}} \hspace*{5mm} \Rightarrow \hspace*{5mm} s (\nabla\Psi(z_0) - x_1) &=& (x_0 - z_0) \, \Rightarrow \nonumber \\
        d[s (\nabla\Psi(z_0) - x_1) ] &=& d[x_0 - z_0] \, \Rightarrow \nonumber  \\
        s \nabla^2 \Psi(z_0) dz_0 + (\nabla\Psi(z_0) - x_1) d s &=& - dz_0 \, \Rightarrow \nonumber \\
       (\nabla\Psi(z_0) - x_1) d s  &=& -(s \nabla^2 \Psi(z_0) + I ) dz_0. \label{eq:eq-s-to-curve-diff}
    \end{eqnarray}
\end{enumerate}

Now we proceed with \eqref{eq:int_s}:
\begin{eqnarray}
    \text{\eqref{eq:int_s}} &=& \int \limits_{0}^\infty \left\la \nabla\Psi(z_0) - x_1, \nabla\Psi(z_0) - x_1 \right\ra ds \notag \\
   &\overset{\eqref{eq:eq-s-to-curve-diff}}{=}& \int_{z_0} \la x_1 - \nabla\Psi(z_0) , (s \nabla^2 \Psi(z_0) + I ) dz_0 \ra \notag\\
   &=&  \int \limits_{z_0} \la x_1 - \nabla\Psi(z_0) , dz_0 \ra  + \int_{z_0} \la s(x_1 - \nabla\Psi(z_0)) , \nabla^2 \Psi(z_0) dz_0 \ra \notag\\
   &\overset{\eqref{eq:eq-s-to-curve}}{=}& \int_{z_0} \la x_1 - \nabla\Psi(z_0) , dz_0 \ra  + \int_{z_0} \la z_0 - x_0 , \nabla^2 \Psi(z_0) dz_0 \ra. \label{eq:main_lem_half}
\end{eqnarray}

We notice that
\begin{eqnarray*}
    d \la z_0, \nabla \Psi(z_0) \ra &=& \la z_0, \nabla^2 \Psi(z_0) dz_0 \ra  +\la dz_0, \nabla \Psi(z_0) \ra \, \Rightarrow \notag \\
    \la z_0, \nabla^2 \Psi(z_0) dz_0 \ra &=& d \la z_0, \nabla \Psi(z_0) \ra   -\la  \nabla \Psi(z_0), dz_0 \ra. 
\end{eqnarray*}
As a consequence, we further proceed with \eqref{eq:main_lem_half}:
\begin{eqnarray}
    \eqref{eq:main_lem_half} &= & \int_{z_0} \la x_1 - \nabla\Psi(z_0) , dz_0 \ra  + \int_{z_0} \la z_0 - x_0 , \nabla^2 \Psi(z_0) dz_0 \ra \notag \\
    &=&  \int_{z_0} \la x_1, dz_0 \ra   -  \int_{z_0} \la \nabla\Psi(z_0) , dz_0 \ra \notag\\
    &\,&\hspace*{-1mm} +  \int_{z_0} d \la z_0  , \nabla \Psi(z_0) \ra  - \int_{z_0}  \la \nabla \Psi(z_0), dz_0 \ra -\int_{z_0} \la x_0 , \nabla^2 \Psi(z_0) dz_0 \ra \notag\\
    &=& \! \int_{z_0} \! \la x_1, dz_0 \ra   - 2\!\int_{z_0} \!\! \la \nabla\Psi(z_0) , dz_0 \ra +   \!\int_{z_0} \!\! d \la z_0  , \nabla \Psi(z_0) \ra  - \! \int_{z_0} \!\! \la x_0 , \nabla^2 \Psi(z_0) dz_0 \ra. \label{eq:before insert limits}
\end{eqnarray}

Under all integrals we have closed form differentials 
\begin{eqnarray*}
    \la x_1, dz_0 \ra &=& d\la x_1, z_0 \ra, \\
    \la \nabla\Psi(z_0) , dz_0 \ra &=& d\Psi(z_0), \\
    \la x_0 , \nabla^2 \Psi(z_0) dz_0 \ra &=&  d \la x_0, \nabla \Psi(z_0) \ra.
\end{eqnarray*}
We integrate them from initial point $x_0$ to the final $\nabla \overline{\Psi} (x_1)$ according to limits \eqref{eq:limits} and get
\begin{eqnarray}
    \eqref{eq:before insert limits} &=&\int \limits_{z_0} d \la x_1, z_0 \ra   - 2\int \limits_{z_0} d\Psi(z_0) +   \int \limits_{ z_0} d \la z_0  , \nabla \Psi(z_0) \ra  -\int \limits_{z_0} d \la x_0, \nabla \Psi(z_0) \ra \notag \\
    &=& \la x_1,  \nabla \overline{\Psi} (x_1) \ra  - \la x_1 , x_0 \ra + 2(\Psi(x_0) - \Psi(\nabla \overline{\Psi} (x_1))) +  \la (\nabla \overline{\Psi} (x_1), \nabla \Psi (\nabla \overline{\Psi} (x_1)) \ra \notag \\
    &\,&\hspace*{0mm}- \, \la x_0, \nabla \Psi (x_0) \ra +  \la x_0, \nabla \Psi (x_0) \ra -  \la x_0, \nabla \Psi(\nabla \overline{\Psi} (x_1)) \ra.  \label{eq:final before simple}
\end{eqnarray}

Now we use properties of conjugate functions (Lemma \eqref{lem:conv_conj}):
\begin{eqnarray*}
    \Psi(\nabla \overline{\Psi} (x_1)) &\overset{\text{\ref{it:conv_grad0} + \ref{it:conv_FY}}}{=}& \la \nabla \overline{\Psi}(x_1),  x_1 \ra - \overline{\Psi}(x_1), \\
    \nabla \Psi (\nabla \overline{\Psi} (x_1)) &\overset{\text{\ref{it:conv_grad0} + \ref{it:conv_grad1}}}{=}& x_1.
\end{eqnarray*}


This allows us to simplify  \eqref{eq:final before simple}:
\begin{eqnarray}
 \eqref{eq:final before simple} &=& \la x_1,  \nabla \overline{\Psi} (x_1) \ra  - \la x_1 , x_0 \ra + 2(\Psi(x_0) +  \overline{\Psi} (x_1) - \la \nabla \overline{\Psi}(x_1),  x_1 \ra ) +  \la (\nabla \overline{\Psi} (x_1), x_1 \ra \notag \\
    &\,&\hspace*{0mm}- \,\la x_0, \nabla \Psi (x_0) \ra +  \la x_0, \nabla \Psi (x_0) \ra - \la x_0,x_1 \ra \notag \\
    &=& 2[\Psi(x_0) +  \overline{\Psi} (x_1)  - \la x_0,x_1 \ra]. \notag 
\end{eqnarray}
\end{proof}

Integrating equality \eqref{eq:main_eq_lem_app} over the given transport plan $\pi$ and considering the formulas for the losses \eqref{eq:dual formulation} and \eqref{eq:OFM loss}, we derive our Theorem \ref{thm:loss equiv}.


\textbf{Proof of Theorem \ref{thm:loss equiv}} (OFM and OT connection)

\begin{proof} Main Integration Lemma \ref{lem:main lemma} states that for any fixed points $x_0, x_1$ we have
\begin{eqnarray*}
     \int \limits_{0}^1 \| x_1 - x_0 - u_{t}^{\Psi}(x_t) \|^2 dt = 2 [\Psi(x_0) + \overline{\Psi}(x_1) - \la x_0, x_1 \ra].
\end{eqnarray*}
Taking math expectation over any plan $\pi$ (integration w.r.t. points $x_0, x_1 \sim \pi$ ) gives 
\begin{equation*}
      \underset{=  \mathcal{L}^\pi_{OFM}(\Psi)} {\underbrace{\EE_{x_0, x_1 \sim \pi}  \int_{0}^1 \| u^{\Psi}_t (x_t) - (x_1 - x_0)  \|^2 dt}} = 2  \cdot \underset{=\mathcal{L}_{OT}(\Psi)}{\underbrace{\EE_{x_0, x_1 \sim \pi} [\Psi(x_0) + \overline{\Psi}(x_1) ]}}  - \underset{=: \textsc{Const}'(\pi)}{\underbrace{2\cdot\EE_{x_0, x_1 \sim \pi} [\la x_0, x_1 \ra]}},
\end{equation*}
where $\textsc{Const}'(\pi)$ does not depend on $\Psi.$ Hence, both minimums of OFM loss $\mathcal{L}^\pi_{OFM}(\Psi)$  and of OT dual form loss $\mathcal{L}_{OT}(\Psi)$ are achieved at the same functions. \end{proof}

\textbf{Proof of Proposition \ref{prop:intractable dist}} (Intractable Distance)

\begin{proof} We recall the definitions of $\textsc{dist}(u, u^*)$  \eqref{eq:dist} and FM loss $\mathcal{L}^{ \pi^*}_{FM}(u)$ \eqref{eq:rectflow min}:
\begin{eqnarray*}
\textsc{dist}(u, u^*) \!\!\!&=&\!\!\! \int_{0}^1 \int_{\R^D} \|u_{t} (x_t) - u_t^* (x_t)\|^2\underset{= p^*_t(x_t)}{\underbrace{\phi_t^*\#p_0(x_t)}} dx_t dt, \\
\mathcal{L}^{\pi^*}_{FM}(u) \!\!\!&=&\!\!\!   \int_{0}^1 \left\{ \int\limits_{\R^D \times \R^D} \|  u_t(x_t) - (x_1 - x_0) \|^2 \pi^*(x_0, x_1) dx_0 dx_1 \right\}, x_t = (1-t)x_0 + tx_1.
\end{eqnarray*}
In the optimal plan $\pi^*$, each point $x_0$ almost surely goes to the single point $\nabla \Psi^*(x_0)$. Hence, in FM loss, we can leave only integration over initial points $x_0$ substituting $x_1 = \nabla \Psi^*(x_0)$ for fixed time $t$:
\begin{eqnarray}
&\,\!&\hspace*{-20mm}\int\limits_{\R^D \times \R^D} \|  u_t(x_t) - (x_1 - x_0) \|^2 \pi^*(x_0, x_1) dx_0 dx_1 \!\! \notag \\ \hspace*{7mm}&=&  \!\!\!   \int\limits_{\R^D} \|  u_t(x_t) - (\nabla \Psi^*(x_0) - x_0) \|^2 p_0(x_0) dx_0\,,\,\,
x_t = (1-t)x_0 + t\nabla \Psi^*(x_0). \label{eq:before optimal} 
\end{eqnarray}
We notice that dynamic OT vector field $u^* = u^{\Psi^*}$ is the optimal one with potential $\Psi^*$. Moreover, for any point $x_t = (1-t)x_0 + t\nabla \Psi^*(x_0)$ generated by $u^{*}$, we can calculate $u_t^{*}(x_t) = u_t^{\Psi^*}(x_t) = \nabla \Psi^*(x_0) - x_0$. It is the same expression as from \eqref{eq:before optimal}, i.e.,
\begin{eqnarray}
\eqref{eq:before optimal}  &=&     \int\limits_{\R^D} \|  u_t(x_t) - (\nabla \Psi^*(x_0) - x_0) \|^2 p_0(x_0) dx_0   \notag\\
&=& \int\limits_{\R^D} \|  u_t(x_t) - u_t^{*}(x_t) \|^2 p_0(x_0) dx_0\,,\,\,
x_t = (1-t)x_0 + t\nabla \Psi^*(x_0). \notag 
\end{eqnarray}
Finally, we change the variable $x_0$ to $ x_t = \phi_t^*(x_0)$, and probability changes as $p_0(x_0) dx_0 = \phi_t^*\#p_0(x_t) dx_t = p_t^*(x_t)dx_t$. In new variables, we obtain the result
$$\int\limits_{\R^D \times \R^D} \|  u_t(x_t) - (x_1 - x_0) \|^2 \pi^*(x_0, x_1) dx_0 dx_1  = \int_{\R^D} \|u_{t} (x_t) - u_t^* (x_t)\|^2 p^*_t(x_t) dx_t.$$
Hence, the integration over time $t$ gives the desired equality 
$$\textsc{dist}(u, u^*) = \mathcal{L}^{ \pi^*}_{FM}(u),$$
and $\mathcal{L}^{ \pi^*}_{FM}(u^*) = \textsc{dist}(u^*, u^*) = 0$.
\end{proof}

\textbf{Proof of Proposition \ref{prop:dist for optimal}} (Tractable Distance For OFM)

\begin{proof} For the vector field $u^{\Psi}$, we apply the formula for intractable distance from Proposition \ref{prop:intractable dist}, i.e,
\begin{eqnarray*}
    \textsc{dist}(u^{\Psi}, u^{\Psi^*}) = \mathcal{L}_{FM}^{\pi^*}(u^{\Psi}) - \mathcal{L}_{FM}^{\pi^*}(u^{\Psi^*}) \overset{\eqref{eq:OFM loss}}{=} \mathcal{L}_{OFM}^{\pi^*}(\Psi) - \mathcal{L}_{OFM}^{\pi^*}(\Psi^*).
\end{eqnarray*}

According to Main Integration Lemma \ref{lem:main lemma}, for any plan $\pi$ and convex function $\Psi$, we have equality
$$\underset{=  \mathcal{L}^\pi_{OFM}(\Psi)} {\underbrace{\EE_{x_0, x_1 \sim \pi}  \int_{0}^1 \| u^{\Psi}_t (x_t) - (x_1 - x_0)  \|^2 dt}} = 2  \cdot \underset{=\mathcal{L}_{OT}(\Psi)}{\underbrace{\EE_{x_0, x_1 \sim \pi} [\Psi(x_0) + \overline{\Psi}(x_1) ]}}  - \underset{=: \textsc{Const}'(\pi)}{\underbrace{2\cdot\EE_{x_0, x_1 \sim \pi} [\la x_0, x_1 \ra]}}.$$

Since $\textsc{Const}'(\pi)$ does not depend on $\Psi$, we have the same constant with $\Psi = \Psi^*$ and can eliminate it, i.e.,
\begin{eqnarray}
    &&\hspace*{10mm}\begin{cases}
        \mathcal{L}^\pi_{OFM}(\Psi) = 2\cdot\mathcal{L}_{OT}(\Psi) - \textsc{Const}'(\pi),\\
        \mathcal{L}^\pi_{OFM}(\Psi^*) = 2\cdot\mathcal{L}_{OT}(\Psi^*) - \textsc{Const}'(\pi)
    \end{cases} \notag \\
    &&\hspace*{38mm}\Downarrow \notag \\
    &&\mathcal{L}^\pi_{OFM}(\Psi) - \mathcal{L}^\pi_{OFM}(\Psi^*)  = 2\cdot\mathcal{L}_{OT}(\Psi) - 2\cdot\mathcal{L}_{OT}(\Psi^*). \label{eq:difference eq}
\end{eqnarray}

The right part of \eqref{eq:difference eq} does not depend on a plan $\pi$, thus, the left part is invariant for any plan including optimal plan $\pi^*$, i.e.,
$$\mathcal{L}^\pi_{OFM}(\Psi) - \mathcal{L}^\pi_{OFM}(\Psi^*) = \mathcal{L}^{\pi^*}_{OFM}(\Psi) - \mathcal{L}^{\pi^*}_{OFM}(\Psi^*) = \textsc{dist}(u^{\Psi}, u^{\Psi^*}).$$
\end{proof}

\section{Experiments details}\label{sec:app exp}




\subsection{OFM implementation}

To implement our proposed approach in practice, we adopt fully-connected ICNN architectures proposed in \cite[Appendix B2]{korotin2021wasserstein} (\texttt{W2GN\_ICNN}) and \cite[Appendix E1]{huang2021convex} (\texttt{CPF\_ICNN}). To ensure the convexity, both architectures place some restrictions on the NN's weights and utilized activation functions, see the particular details in the corresponding papers. We take the advantage of their official repositories:
\begin{center}
    \url{https://github.com/iamalexkorotin/Wasserstein2Benchmark};
    \url{https://github.com/CW-Huang/CP-Flow}.
\end{center}

We aggregate the hyper-parameters of our  Algorithm~\ref{alg:OFM} and utilized ICNNs for different experiments in Table~\ref{table:hyperparams}. In all our experiments as the $SubOpt$ optimizer we use LBFGS (\texttt{torch.optim.LBFGS}) with $K_{\text{sub}}$ optimization steps and early stopping criteria based on gradient norm. To find the initial point $z_0^i$ (Step 5 of our Algorithm~\ref{alg:OFM}), we initalize $SubOpt$ with $x_{t^i}^i$. 
As the $Opt$ optimizer we adopt Adam with learning rate $lr$ and other hyperparameters set to be default.

\begin{table}[h]
\centering
\hspace*{-2mm}\begin{tabular}{|l|l|l|l|l|l|}
\hline
Experiment  & ICNN architecture $\Psi_\theta$ & $K$ & $B$ & $lr$ & $K_{\text{sub}}$ \\ \hline
Illustrative 2D     & \texttt{CPF\_ICNN}, $\mathbb{R}^2 \rightarrow \mathbb{R}$, Softplus, [1024, 1024] & 30K & $1024$  &  $10^{-2}$ &  5 \\ \hline
W2 bench., dim. $D$      &  \texttt{W2GN\_ICNN}, $\mathbb{R}^D \!\!\rightarrow \mathbb{R}$, CELU, [128, 128, 64] &  30K & $1024$ & $10^{-3}$ & 50 \\ \hline
ALAE     & \texttt{W2GN\_ICNN}, $\mathbb{R}^{512} \!\rightarrow \mathbb{R}$, CELU, [1024, 1024]  &  10K & 128 & $10^{-3}$ &  10 \\ \hline
\end{tabular}
\vspace{1mm}
\caption{Hyper-parameters of our OFM solvers in different experiments}
\label{table:hyperparams}
\end{table}

\textbf{Minibatch.} Similarly to OT-CFM, in some of our experiments we use non-independent initial plans $\pi$ to improve convergence. We construct $\pi$ as follows: for independently sampled minibatches $X_0, X_1$ of the same size $B$, we build the optimal discrete map and apply it to reorder the pairs of samples. We stress that considering minibatch OT for our method is done exclusively to speed up the training process. Theoretically, our method is agnostic to initial plan $\pi$ and is guaranteed to have an optimum in dynamic OT solution. 

\subsection{2D Example: Comparison with Flow Matching} \label{sec: 2d FMe}

In this subsection, we illustrate that the restriction of the optimization domain only to optimal vector fields in FM loss is crucial for the plan independency and straightness of the obtained trajectories.

For that, we run the same setup from Section \ref{exp:toy-2D-8gau} but with vanila Flow Matching instead of OFM. The obtained trajectories and learned distributions for different initial plans are depicted in Figure \ref{fig:2d flow matching}.
 
\begin{figure}
    \centering
    \includegraphics[width=\linewidth]{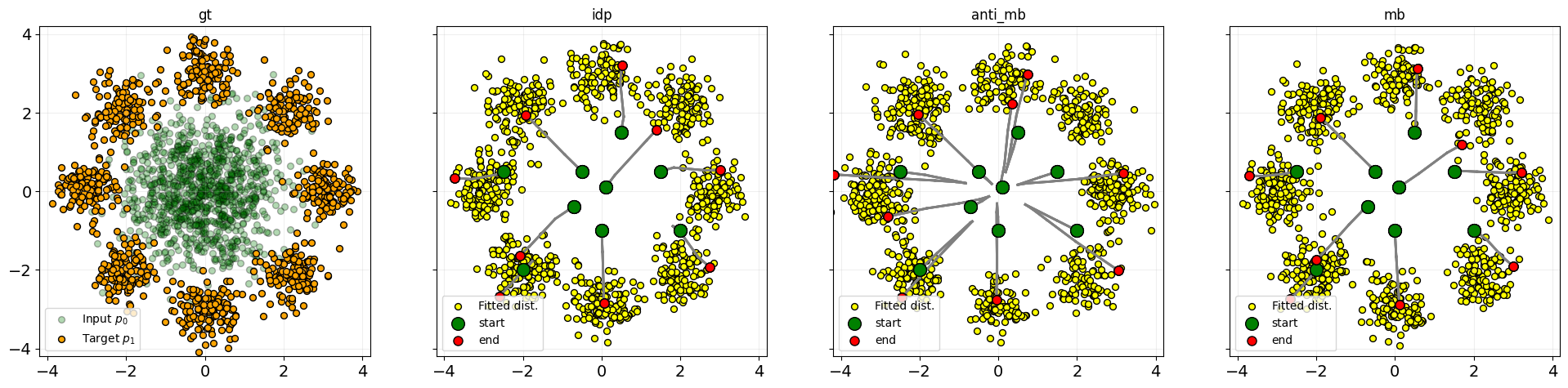}
    \caption{Performance of  Flow Matching on \textit{Gaussian}$\rightarrow$\textit{Eight Gaussians} 2D setup.}
    \label{fig:2d flow matching}
\end{figure}

In comparison with our OFM (Figure \ref{fig:8gau}), basic FM yields more curved trajectories, especially with the misleading anti-minibatch plan. The learned distributions for all plans are similar to the target.

\subsection{Benchmark details}\label{sec:bench details}

\textbf{Metrics.}  Following the authors of the benchmark \cite{korotin2021neural}, to assess the quality of retrieved transport map $T$ between $p_0$ and $p_1$, we use \textit{unexplained variance percentage} (UVP): $\mathcal{L}^2$-UVP$(T) := 100 \cdot \| T - T^*\|^2_{\mathcal{L}^2(p_0)} / \text{Var}(p_1) \%$. For values $\mathcal{L}^2-$UVP$(T) \approx 0\%$, $T$ approximates $T^*$, while for values $\geq 100\%$ $T$ is far from optimal. We also calculate the \textit{cosine similarity} between ground truth directions  $T^* - \text{id}$ and obtained directions $T - \text{id}$, i.e.,
$$\cos(T - \text{id}, T^* - \text{id}) = \frac{\la T - \text{id}, T^* - \text{id} \ra_{\mathcal{L}^2(p_0)}}{\|T - \text{id}\|_{\mathcal{L}^2(p_0)} \cdot \|T^* - \text{id}\|_{\mathcal{L}^2(p_0)}} \in [-1,1].$$
For good approximations the cosine metric is approaching $1$. We estimate $\mathcal{L}^2-$UVP and $\cos$ metrics with $2^{14}$ samples from $p_0$. 

In the experiments, we use the exponential moving average (EMA) \cite{morales2024exponential, He_2020_CVPR} of the trained model weights. EMA creates a smoothed copy of the model whose weights are updated at each new training iteration $t + 1$ as $\theta^{\text{ema}}_{t + 1} = \alpha \theta^{\text{ema}}_{t} + (1 - \alpha) \theta_{t + 1}$, where $\theta_{t + 1}$ are the newly updated original trained weights. We calculate final metrics with $\alpha = 0.999$.

Solvers' results for $\cos$ metric are presented in Table \ref{tab:cbwcos-benchmark}.


\begin{table*}[h]
\centering
\tiny
\begin{tabular}{rrcccccccc}
\toprule
\cmidrule(lr){3-10} 
Solver & Solver type & {$D\!=\!2$} & {$D\!=\!4$} & {$D\!=\!8$} & {$D\!=\!16$} & {$D\!=\!32$} & {$D\!=\!64$} & {$D\!=\!128$} & {$D\!=\!256$} \\
\midrule  
MMv1$^*$\cite{taghvaei20192}& Dual OT solver &  $0.99$  &  $0.99$  &  $0.99$  &  $0.99$  &  $0.98$ &  $0.99$ &   $0.99$  &  $0.99$   \\
\hline

Linear$^*$  & Baseline & $0.75$ & $0.80$ & $0.73$ & $0.73$ & $0.76$ & $0.75$ & $0.77$ & $0.77$ \\ 
\hline


OT-CFM MB  \cite{tong2024improving}  & \multirow{5}{*}{Flow Matching} & $0.999$ & $0.985$ & $0.978$ & $0.968$ & $0.975$ & $0.96$ & $0.949$ & $0.915$ \\ 

RF \cite{liu2023flow} &  & $0.87$ & $0.75$ & $0.65$ & $0.67$ & $0.72$ & $0.70$ & $0.70$ & $0.70$ \\ 

$c$-RF \cite{liu2022rectified} &  & $0.989$ & $0.83$ & $0.83$ & $0.78$ & $0.778$ & $0.762$ & $0.748$ & $0.73$ \\




OFM Ind \textbf{(Ours)} && $0.999$ & $0.993$ & $0.993$ & $0.993$ & $0.999$ & $0.966$ & $0.992$ & $0.981$ \\ 

OFM MB \textbf{(Ours)} &  & $\textbf{0.999}$ & $\textbf{0.994}$ & $\textbf{0.995}$ & $\textbf{0.994}$ & $\textbf{0.999}$ & $\textbf{0.970}$ & $\textbf{0.994}$ & $\textbf{0.986}$ \\ 

\hline
\end{tabular}
\vspace{-2mm}
\captionsetup{justification=centering, font=scriptsize}
\vspace{1mm}
\caption{$\cos$ values of solvers fitted on high-dimensional benchmarks in dimensions $D = 2,4,8, 16, 32, 64, 128, 256$. \\ The best metric over \textit{Flow Matching based} solvers is \textbf{bolded}. * Metrics for MMv1 and linear baseline  are taken from \cite{korotin2021neural}.
}
\label{tab:cbwcos-benchmark}
\vspace{-2mm}
\end{table*}

\textbf{Details of Solvers.} Neural networks' architectures of competing Flow Matching methods and their parameters used in benchmark experiments are presented in Table \ref{tab:bench details}. In this Table, “FC” stands for “fully-connected”. 

\begin{table}[h]
\tiny
\centering
\begin{tabular}{|l|l|l|l|l|l|l|l|}
\hline
Solver  & Architecture                                                               & Activation & Hidden layers & Optimizer & Batch size & Learning rate & Iter. per round * rounds \\ \hline
OT CFM \cite{tong2024improving} & \begin{tabular}[c]{@{}l@{}}FC NN\\ $\R^D \times [0,1] \to \R^D$\end{tabular} & ReLU       & {[}128, 128, 64{]}  & RMSprop   & $1024$  & $10^{-3}$        & $200.000$              \\ \hline
RF  \cite{liu2023flow}    & \begin{tabular}[c]{@{}l@{}}FC NN\\ $\R^D \times [0,1] \to \R^D$\end{tabular} & ReLU       & {[}128, 128, 64{]}  & RMSProp   & $1024$     & $10^{-4}$         & $65.000 * 3$               \\ \hline
$c$-RF \cite{liu2022rectified} & \begin{tabular}[c]{@{}l@{}}FC NN\\ $\R^D \times [0,1] \to \R$\end{tabular}   & ReLU       & {[}128, 128, 64{]}  & RMSProp   & $1024$     & $10^{-5}$        & $100.000 * 2$              \\ \hline
\end{tabular}
\
\captionsetup{justification=centering, font=footnotesize}
\vspace{1mm}
\caption{Parameters of models fitted on benchmark in dimensions $D = 2,4,8, 16, 32, 64, 128, 256$.}
\label{tab:bench details}
\end{table}
Time variable $t$ in $(c-)$RF and OT-CFM's architectures is added as one more dimensionality in input without special preprocessing. In RF and $c$-RF, ODE are solved via Explicit Runge-Kutta method of order $5(4)$ \cite{dormand1980family} with absolute tolerance $10^{-4}-10^{-6}.$  In OFM and $c$-RF, gradients over input are calculated via autograd of \texttt{PyTorch}. 

Following the authors of RF \cite{liu2023flow}, we run only $2-3$ rounds in RF. In further rounds, straightness and metrics change insignificantly, while the error of target distribution learning still accumulates.   

Our implementations of OT-CFM \cite{tong2024improving}  and RF \cite{liu2023flow} are based on the official repositories:
\begin{center}   \url{https://github.com/atong01/conditional-flow-matching}
\url{https://github.com/gnobitab/RectifiedFlow}
\end{center}

Implementation of $c$-RF follows the RF framework with the modification of optimized NN's architecture. Instead of $\R^D \times [0,1] \to \R^D$ net, we parametrize  time-dependent scalar valued model $\R^D \times [0,1] \to \R$ which gradients are set to be the vector field.

\subsection{Unpaired Image-to-image transfer details}

To conduct the experiments with high-dimensional I2I translation empowered with pretrained ALAE autoencoder, we adopt the publicly available code:
\begin{center}
    \url{https://github.com/SKholkin/LightSB-Matching}.
\end{center}
Additional qualitative results for our method are provided in Figure \ref{fig:alae_ad2ch_add}.

\begin{figure}[h]
\vspace{-2mm}
\hspace{0mm}\begin{subfigure}[b]{0.99\linewidth}
\centering
\includegraphics[width=1.0\linewidth]{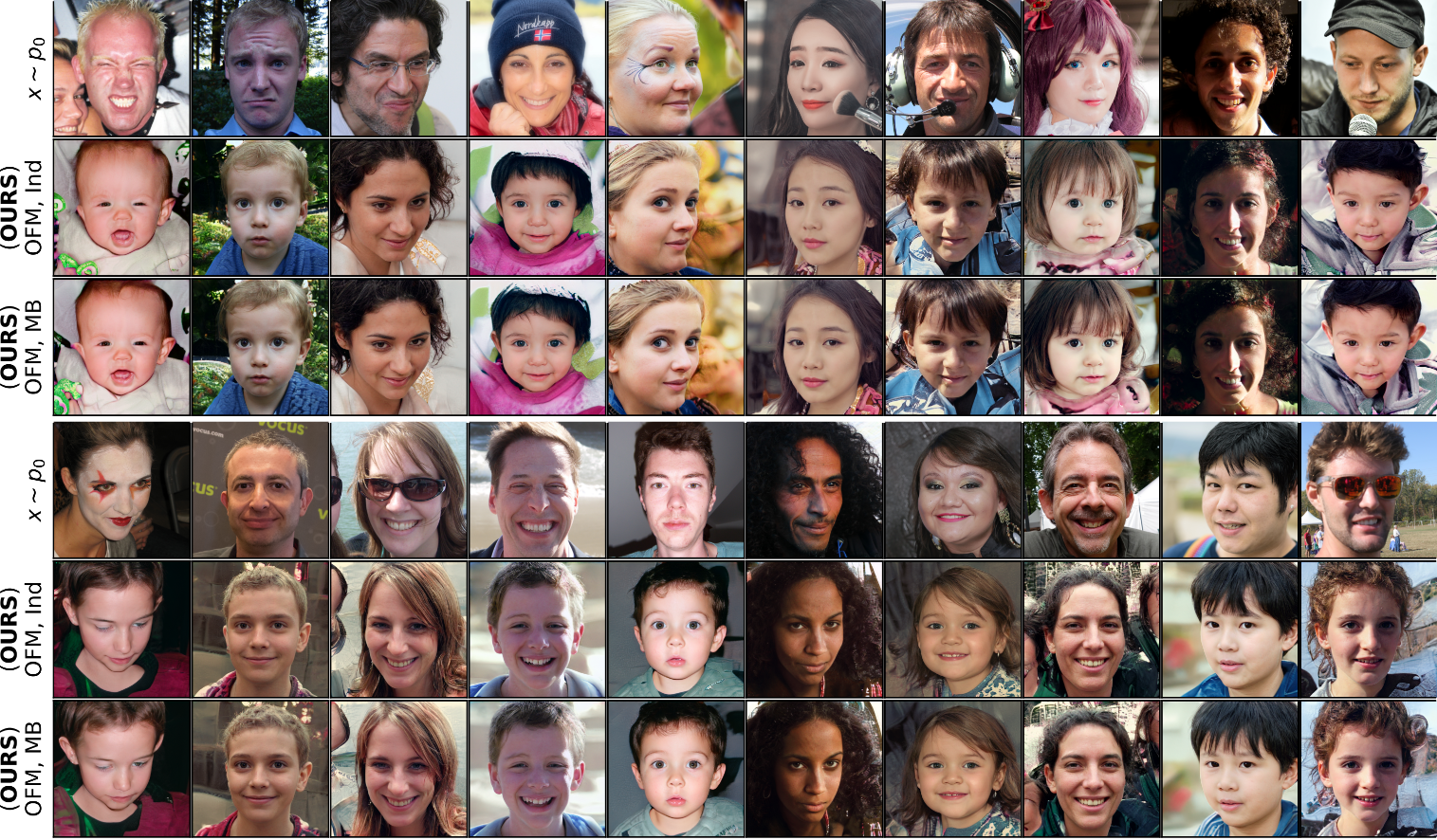}
\vspace{0.51mm}
\end{subfigure}
\vspace{-5mm}

\caption{\centering \small Unpaired I2I \textit{Adult}$\rightarrow$\textit{Child} by \textbf{our} OFM solver, ALAE $1024\!\times\! 1024$ FFHQ latent space. \linebreak The samples are uncurated.}
\label{fig:alae_ad2ch_add}
\vspace{-5mm}
\end{figure}

\subsection{Computation time} \begin{wrapfigure}{r}{0.5\textwidth}
    \vspace{-12mm}
    \includegraphics[width=1\linewidth]{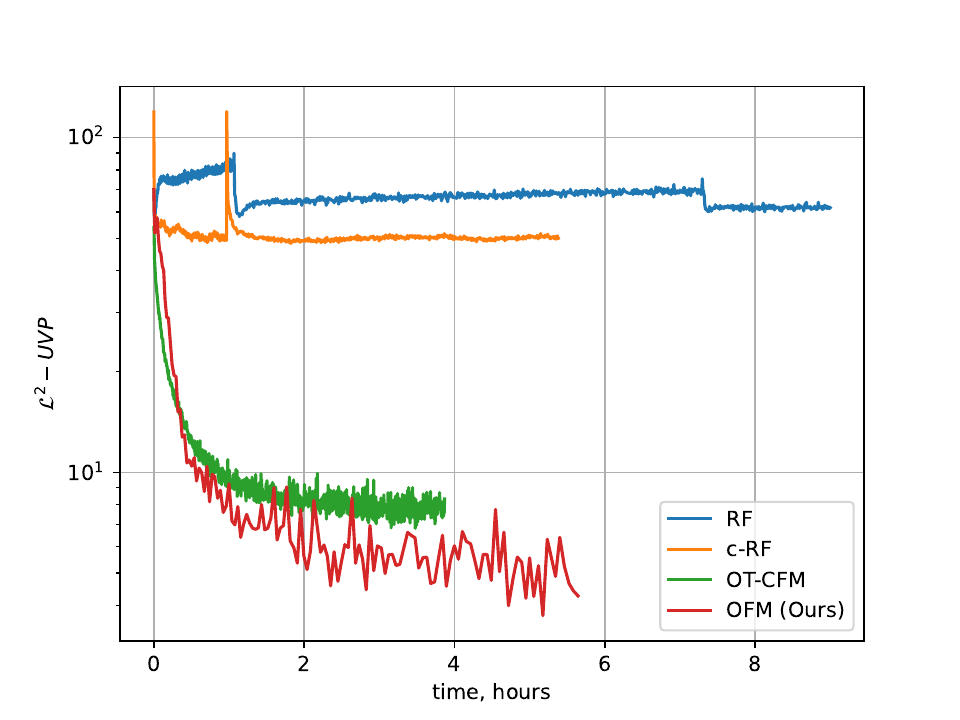}
    \caption{\vspace{-1mm}$\mathcal{L}^2-$UVP metric depending on the elapsed training time in dimension $D = 32.$}
    \label{fig:L2_time}
    \vspace{-6mm}
\end{wrapfigure} In what follows, we provide approximate running times for training our OFM and other FM-based method in different experiments with hyper-parameters provided in Table \ref{table:hyperparams}. 

In the Illustrative 2D experiment, the training takes $\approx$ $1.5$ hours on a single 1080 ti GPU. In the Wasserstein-2 benchmark, the computation time depends on the dimensionality $D = 2, 4, \dots, 256$. Totally, all the benchmark experiments (both with Ind and MB plan $\pi$) take $\approx$ 3 days on three A100 GPUs. In the ALAE experiment, the training stage lasts for $\approx$ 5 hours on a single 1080 ti GPU. 

For better understanding of methods' behaviour over time, we depict achieved $\mathcal{L}^2-$UVP metric on the benchmark ($D=32$) depending on elapsed training time in Figure \ref{fig:L2_time}. We note that the training iteration of OFM is computationally expensive, but it requires less steps to achieve the best results.

\subsection{Amortization technique}
In order to train our OFM, we need to efficiently solve subproblem \eqref{eq:min subtask}. As an example of more advanced technique rather than LBFGS solver, we discuss amortization trick proposed in \cite{amos2023on}. 

Namely, we find an approximate solution of \eqref{eq:min subtask} at point $x_t$ and time $t$ with an extra  MLP $A_\phi(\cdot,\cdot):\mathbb{R}^D \times [0,1] \to \mathbb{R}^D$ :
\begin{eqnarray}
A_\phi(x_t, t) \approx \arg\min_{z_0 \in \mathbb{R}^D} \left[ \frac{(1-t)}{2} \| z_0\|^2 + t \Psi (z_0) - \langle x_t , z_0 \rangle \right],
\end{eqnarray}
and then run sub-problem solver (LBFGS) initialized with $A_\phi(x_t, t)$ until convergence.  We modify the training pipeline and include learning of parameters $\phi$ of $A_\phi$ in Algorithm \ref{alg:OFM_with_amor}.

\begin{algorithm}[ht!]
\caption{\algname{Optimal Flow Matching with Amortization} }
\label{alg:OFM_with_amor}   
\begin{algorithmic}[1]
\REQUIRE Initial transport plan $\pi \in \Pi(p_0, p_1)$, number of iterations $K$,  batch size $B$, optimizer $Opt$, amortization optimizer $AmorOpt$, sub-problem optimizer $SubOpt$, ICNN $\Psi_\theta$, MLP  $ A_\phi$
\FOR{$k=0,\ldots, K-1$}
\STATE Sample batch $\{(x^i_0, x^i_1)\}_{i=1}^B$ of size $B$ from plan $\pi$;
\STATE Sample times batch $\{t^i\}_{i=1}^B$ of size $B$ from $U[0,1]$;
\STATE Calculate linear interpolation $x^i_{t^i} = (1-t^i)x^i_0 + t^i x^i_1$ for all $i \in \overline{1,B}$;
\STATE Compute initialization $z^i_{init} = A_\phi(x^i_{t^i}, t^i)$ for all $i \in \overline{1,B}$; 
\STATE Find detached solution $z^i_0$  of \eqref{eq:min subtask} via $SubOpt$ initialized with $z^i_{init}$ for all $i \in \overline{1,B}$;

\STATE Calculate OFM loss $\hat{\mathcal{L}}_{OFM}$ 
$$\hat{\mathcal{L}}_{OFM} = \frac{1}{B} \sum_{i=1}^B \left\la  \textsc{NO-GRAD} \left\{ 2 \left( t^i \nabla^2 \Psi_{\theta}(z^i_0)  
 +  (1-t^i) I \right)^{-1} \frac{(x^i_0 - z^i_0)}{t^i} \right\},   \nabla \Psi_\theta(z^i_0) \right\ra;$$

\STATE Update parameters $\theta$ via optimizer $Opt$ step with $\frac{d \hat{\mathcal{L}}_{OFM}}{d \theta}$;  

\STATE Calculate Amortization loss $\mathcal{L}_{Amor}$ 
$$\mathcal{L}_{Amor} = \frac{1}{B} \sum_{i=1}^B \|z^i_{init} - z^i_{0} \|^2;$$

\STATE Update parameters $\phi$ via optimizer $AmorOpt$ step with $\frac{d \mathcal{L}_{Amor}}{d \phi}$;

\ENDFOR 
\end{algorithmic}
\end{algorithm}

 During the experiments, we did not find any improvements of the final metrics, in comparison with the original OFM with the same hyperparameters. However, this augmentation potentially can cause a shrinking of the overall training time. During training, $A_\phi(\cdot,\cdot)$ learns to predict more and more accurate initial solution $z^i_{init}$ and, thus,  reduces the required number of the expensive $SubOpt$ steps.

\section{Limitations and Broader Impact}
\label{sec-limitations}

We find 3 limitations of our OFM which are to be addressed in the future research.

\textbf{(a) Flow map inversion.} During training, we need to compute $(\phi^{\Psi_{\theta}}_{ t})^{-1}(\cdot)$ via solving strongly convex subproblem \eqref{eq:min subtask}. In practice, we approach it by the standard gradient descent (with LBFGS optimizer), but actually there exist many improved methods to solve such conjugation problems more effectively in both the optimization \cite{van2017fastest, hiriart2007parametric} and OT \cite{amos2023on, makkuva2020optimal}. This provides a dozen of opportunities for improvement, but leave such advanced methods for future research.

\textbf{(b) ICNNs.} It is known that ICNNs may underperform compared to regular neural networks \cite{korotin2021neural,  korotin2022wasserstein}. Thus, ICNN parametrization may limit the performance of our OFM. Fortunately, deep learning community actively study ways to improve ICNNs \cite{chaudhari2024gradient,bunne2022supervised,richter2021input,hoedt2024principled} due to their growing popularity in various tasks \cite{yang2021optimization, lawrynczuk2022input, chen2018optimal}. We believe that the really expressive ICNN architectures are yet to come.

\textbf{(c) Hessian inversion.} We get the gradient of our OFM loss via formula from Proposition \ref{rmk:loss in practice}. There we have to invert the hessian $\nabla^2 \Psi(\cdot)$, which is expensive. We point to addressing this limitation as a promising avenue for future studies.

\vspace{2mm}
\hrule

\textbf{Broader impact}. This paper presents work whose goal is to advance the field of Machine Learning. There are many potential societal consequences of our work, none of which we feel must be specifically highlighted here.

\newpage
\section{Check List}
\begin{enumerate}

\item {\bf Claims}
    \item[] Question: Do the main claims made in the abstract and introduction accurately reflect the paper's contributions and scope?
    \item[] Answer: \answerYes{} 
    \item[] Justification: In Abstract and Introduction, we completely describe our contributions. For every contribution, we provide a link to the section about it.

\item {\bf Limitations}
    \item[] Question: Does the paper discuss the limitations of the work performed by the authors?
    \item[] Answer: \answerYes{}
    \item[] Justification: We discuss limitations in Appendix~\ref{sec-limitations}.

\item {\bf Theory Assumptions and Proofs}
    \item[] Question: For each theoretical result, does the paper provide the full set of assumptions and a complete (and correct) proof?
    \item[] Answer: \answerYes{} 
    \item[] Justification: Proofs are provided in Appendix~\ref{sec:proofs}.

    \item {\bf Experimental Result Reproducibility}
    \item[] Question: Does the paper fully disclose all the information needed to reproduce the main experimental results of the paper to the extent that it affects the main claims and/or conclusions of the paper (regardless of whether the code and data are provided or not)?
    \item[] Answer: \answerYes{} 
    \item[] Justification: Experimental details are discussed in Appendix~\ref{sec:app exp}. Code for the experiments is
provided in supplementary materials. All the datasets are available in public.

\item {\bf Open access to data and code}
    \item[] Question: Does the paper provide open access to the data and code, with sufficient instructions to faithfully reproduce the main experimental results, as described in supplemental material?
    \item[] Answer:  {\textcolor{blue}{[YES]}}
    \item[] Justification:Code will be made public after the paper acceptance (now we provide it in the supplementary). Experimental details are provided in Appendix \ref{sec:app exp}. All the datasets are publicly available.

\item {\bf Experimental Setting/Details}
    \item[] Question: Does the paper specify all the training and test details (e.g., data splits, hyperparameters, how they were chosen, type of optimizer, etc.) necessary to understand the results?
    \item[] Answer: \answerYes{} 
    \item[] Justification: Experimental details are discussed in Appendix~\ref{sec:app exp}.

\item {\bf Experiment Statistical Significance}
    \item[] Question: Does the paper report error bars suitably and correctly defined or other appropriate information about the statistical significance of the experiments?
    \item[] Answer: \answerNo{} 
    \item[] Justification: Unfortunetely, running each experiment several times for statistics is too computentionaly expensive.

\item {\bf Experiments Compute Resources}
    \item[] Question: For each experiment, does the paper provide sufficient information on the computer resources (type of compute workers, memory, time of execution) needed to reproduce the experiments?
    \item[] Answer: \answerYes{} 
    \item[] Justification: In Appendix~\ref{sec:app exp}, we mention time and resources.

\item {\bf Code Of Ethics}
    \item[] Question: Does the research conducted in the paper conform, in every respect, with the NeurIPS Code of Ethics \url{https://neurips.cc/public/EthicsGuidelines}?
    \item[] Answer: \answerYes{} 
    \item[] Justification: Research conforms with NeurIPS Code of Ethics.

\item {\bf Broader Impacts}
    \item[] Question: Does the paper discuss both potential positive societal impacts and negative societal impacts of the work performed?
    \item[] Answer: \answerYes{} 
    \item[] Justification: We discuss broader impact in Appendix~\ref{sec-limitations}.

\item {\bf Safeguards}
    \item[] Question: Does the paper describe safeguards that have been put in place for responsible release of data or models that have a high risk for misuse (e.g., pretrained language models, image generators, or scraped datasets)?
    \item[] Answer: \answerNA{} 
    \item[] Justification: The research does not need safeguards.

\item {\bf Licenses for existing assets}
    \item[] Question: Are the creators or original owners of assets (e.g., code, data, models), used in the paper, properly credited and are the license and terms of use explicitly mentioned and properly respected?
    \item[] Answer: \answerYes{} 
    \item[] Justification: We cite each used assets.

\item {\bf New Assets}
    \item[] Question: Are new assets introduced in the paper well documented and is the documentation provided alongside the assets?
    \item[] Answer: \answerYes{} 
    \item[] Justification: New code is attached in the supplementary materials. The license will be provided after acceptance.

\item {\bf Crowdsourcing and Research with Human Subjects}
    \item[] Question: For crowdsourcing experiments and research with human subjects, does the paper include the full text of instructions given to participants and screenshots, if applicable, as well as details about compensation (if any)? 
    \item[] Answer: \answerNA{} 
    \item[] Justification: The research does not engage with Crowdsourcing or Human subjects.

\item {\bf Institutional Review Board (IRB) Approvals or Equivalent for Research with Human Subjects}
    \item[] Question: Does the paper describe potential risks incurred by study participants, whether such risks were disclosed to the subjects, and whether Institutional Review Board (IRB) approvals (or an equivalent approval/review based on the requirements of your country or institution) were obtained?
    \item[] Answer: \answerNA{} 
    \item[] Justification: The research does not engage with Crowdsourcing or Human subjects.
\end{enumerate}     
\end{document}